
\documentclass{article}

\usepackage{microtype}
\usepackage{graphicx}
\usepackage{subfigure}
\usepackage{booktabs} 

\usepackage{hyperref}



\usepackage[accepted]{icml2024}

\usepackage{amsmath}
\usepackage{amssymb}
\usepackage{mathtools}
\usepackage{amsthm}
\usepackage{bm}
\usepackage{amsfonts}
\usepackage{float}
\usepackage[normalem]{ulem}
\usepackage{tabularx}
\usepackage{caption}

\newcolumntype{L}{>{\raggedright\arraybackslash}X}

\usepackage[capitalize,noabbrev]{cleveref}

\theoremstyle{plain}
\newtheorem{theorem}{Theorem}[section]

\newtheorem{lemma}[theorem]{Lemma}

\theoremstyle{definition}

\theoremstyle{remark}

\usepackage[textsize=tiny]{todonotes}

\icmltitlerunning{Nearest Neighbour Score Estimators for Diffusion Generative Models}

\begin{document}

\newcommand{\bx}{\mathbf{x}}  
\newcommand{\bz}{\mathbf{z}}  
\newcommand{\student}{\bm{f}_{\theta}}
\newcommand{\teacher}{\bm{f}_{\theta^-}}
\newcommand{\bxi}{\bx^{(i)}}
\newcommand{\bxj}{\bx^{(j)}}
\newcommand{\bxk}{\bx^{(k)}}
\newcommand{\dataset}{\mathcal{D}}
\newcommand{\pmean}{\mathbb{E}\left[ \bx | \bz, t \right]}
\newcommand{\snispmean}{\widehat{\pmean}_{KNN}}
\newcommand{\pdata}{p_{data}\left(\bx\right)}

\twocolumn[
\icmltitle{Nearest Neighbour Score Estimators for Diffusion Generative Models}



\icmlsetsymbol{equal}{*}

\begin{icmlauthorlist}
\icmlauthor{Matthew Niedoba}{ubc,iai}
\icmlauthor{Dylan Green}{ubc,iai}
\icmlauthor{Saeid Naderiparizi}{ubc,iai}
\icmlauthor{Vasileios Lioutas}{ubc,iai}
\icmlauthor{Jonathan Wilder Lavington}{ubc,iai}
\icmlauthor{Xiaoxuan Liang}{ubc,iai}
\icmlauthor{Yunpeng Liu}{ubc,iai}
\icmlauthor{Ke Zhang}{ubc,iai}
\icmlauthor{Setareh Dabiri}{iai}
\icmlauthor{Adam Ścibior}{ubc,iai}
\icmlauthor{Berend Zwartsenberg}{iai}
\icmlauthor{Frank Wood}{ubc,iai}

\end{icmlauthorlist}

\icmlaffiliation{ubc}{Department of Computer Science, University of British Columbia, Vancouver, Canada}
\icmlaffiliation{iai}{Inverted AI, Vancouver, Canada}

\icmlcorrespondingauthor{Matthew Niedoba}{mniedoba@cs.ubc.ca}

\icmlkeywords{Machine Learning, ICML}

\vskip 0.3in
]



\printAffiliationsAndNotice{}  

\begin{abstract}
Score function estimation is the cornerstone of both training and sampling from diffusion generative models. Despite this fact, the most commonly used estimators are either biased neural network approximations or high variance Monte Carlo estimators based on the conditional score. We introduce a novel nearest neighbour score function estimator which utilizes multiple samples from the training set to dramatically decrease estimator variance. We leverage our low variance estimator in two compelling applications. Training consistency models with our estimator, we report a significant increase in both convergence speed and sample quality. In diffusion models, we show that our estimator can replace a learned network for probability-flow ODE integration, opening promising new avenues of future research.

\end{abstract}

\section{Introduction}
    
Diffusion models \cite{sohl2015deep, ho2020denoising, song2020score} have emerged as a powerful class of generative models. They have seen broad adoption across a variety of tasks such as image generation \cite{rombach2022high}, video generation \cite{harvey2022flexible}, and 3D object synthesis \cite{poole2022dreamfusion}. Although these models achieve state of the art performance, their sampling procedure requires integration of the probability flow ODE (PF-ODE) or diffusion SDE \cite{song2020score}. This procedure hampers sampling speed as each integration step requires a neural network evaluation. Typically, many integration steps are required per sample \cite{ho2020denoising, karras2022elucidating}.

Motivated by this shortcoming, an array of methods have been proposed which learn few-step approximations of the PF-ODE solution. Amongst these, some attempt to distill a pre-trained diffusion network such as progressive distillation \cite{salimans2022progressive}, consistency distillation \cite{song2023consistency} or adversarial diffusion distillation \cite{sauer2023adversarial}. Alternatively, consistency training \cite{song2023consistency,song2023improved} proposes a method to train consistency models without a pre-trained diffusion model. We refer to the general class of models including diffusion models and related few-step methods as diffusion generative models.


As diffusion generative models are grounded in diffusion processes, they utilize the \emph{score function}. This critical quantity can be estimated by network approximation or Monte Carlo methods. Diffusion and consistency training utilize a one-sample Monte Carlo estimator of the score  in their training procedure, (excepting \cite{xu2023stable}). Alternatively, both diffusion sampling and diffusion distillation methods rely upon neural network estimators. However, each estimator has drawbacks. The single-sample Monte Carlo estimator has high variance \cite{xu2023stable} while learned neural network approximators are imperfect, leading to bias \cite{karras2022elucidating}.


Our work puts forward a new method for score function estimation with lower variance than the one-sample estimator and less bias than neural network approximation. Utilizing self-normalized importance sampling \cite{hesterberg1995weighted}, we estimate the score through a weighted average over a batch of training set examples. We draw these elements from a proposal which prioritizes examples which are the most likely to have generated the current noisy value (\cref{fig:posterior_illustration}). We show that because diffusion processes are Gaussian, these examples are equivalent to the $\ell_2$ nearest neighbours to the noisy data. With this finding, we can rapidly identify and sample important examples using a fast KNN search \cite{johnson2019billion}. We analyze our method and derive bounds on the variance of our estimator.

Empirically, we measure our performance in three settings. First, we compare our score estimate against the analytic score on CIFAR-10 \cite{krizhevsky2009learning}. We find that our method has near-zero variance and bias -- substantially outperforming both STF \cite{xu2023stable} and EDM \cite{karras2022elucidating}. Applying our method to consistency models, we find that replacing one-sample estimators with our method improves consistency training -- resulting in faster convergence and higher sample quality. Finally, we show that our method can replace neural networks in PF-ODE integration, opening interesting future research avenues. We release our code\footnote{\href{https://github.com/plai-group/knn-score}{https://github.com/plai-group/knn-score}} to encourage use of our estimator by the community.
\section{Background}
    Diffusion processes form the foundation of both diffusion models \cite{sohl2015deep, ho2020denoising} and consistency models \cite{song2023consistency}. In the variance exploding formulation \cite{song2020score}, diffusion processes convolve a data distribution $p(\bx)$, $\bx \in \mathbb{R}^d$ with zero-mean Gaussian noise $\mathcal{N}\left(\mathbf{0}, \sigma(t)^2\mathbf{I}\right)$, resulting in a continuous  marginal distribution path $p_t(\bz) = \int p_{\text{data}}\left(\bx\right)\mathcal{N}\left(\bz ; \bx, \sigma(t)^2\mathbf{I}\right)d\bx $, $t \in \left[t_{\mathrm{min}}, T\right], \bz \in \mathbb{R}^d.$ The noise schedule of the diffusion process $\sigma(t)$ and diffusion length $T$ are selected to ensure $p_T(\bz)$ is indistinguishable from a normal prior $\pi(\bz) = \mathcal{N}\left(\bz ; \mathbf{0}, \sigma(T)^2\mathbf{I}\right).$

Since the density $p(\bx)$ is usually unknown, it is approximated by $p_{\text{data}}(\bx)$, a finite mixture of Dirac measures $\pdata = \frac{1}{N}\sum_{\bxi \in \dataset} \delta \left(\bx - \bxi\right)$ where $\mathcal{D} = \left\{\bx^{(1)}, \ldots, \bx^{(N)}\right\}$ is a dataset of samples drawn from the true data density. 
With this approximation, the time-varying marginals $p_t(\bz)$ and posteriors $p_t(\bxi | \bz)$ take the forms 
\begin{align}
    &p_t(\bz) = \sum_{i=1}^N p_t(\bz | \bxi) p_{\text{data}}(\bxi) =  \sum_{i=1}^N \frac{p_t(\bz | \bxi)}{N}\label{eq:marginal}\\
    &p_t(\bxi | \bz) = \frac{p_t(\bz | \bxi)p_{\text{data}}\left(\bxi\right)}{p_t(\bz)} = \frac{p_t(\bz | \bxi)}{N\cdot p_t(\bz)} \label{eq:posterior}
\end{align} 
Under the finite data approximation, each posterior distribution $p_t(\bxi | \bz)$ is categorical over the elements of $\mathcal{D}$, ie. $\sum_{i=1}^N p(\bxi | \bz) = 1$. In addition, we refer to $p_t(\bz | \bxi) = \mathcal{N}\left(\bz ; \bx, \sigma(t)^2\mathbf{I}\right)$ as the forward likelihood.

Diffusion processes can be described using stochastic or ordinary differential equations \cite{song2020score}. In this work, we focus on the probability flow ODE (PF-ODE) formulation, introduced by \citet{song2020score}. Following the parameterization of EDM \cite{karras2022elucidating}, we select $\sigma(t) = t$. However, other choices are possible, as discussed in \cref{ap:general_diffusion}. With our chosen diffusion schedule, the PF-ODE is given by
\begin{equation}
    d\bz = -t \ \nabla_\bz \log p_t(\bz) dt. \label{eq:pfode}
\end{equation}
Numerical integration of the PF-ODE requires repeated evaluation of $\nabla_\bz \log p_t(\bz)$, known as the \emph{score function}. Because the forward likelihood is Gaussian, the \emph{conditional} score has the straightforward form $\nabla_\bz \log p_t(\bz | \bxi) = \frac{\bxi - \bz}{t^2}$. However, calculating the \emph{marginal} score for fixed $\bz$ and $t$ requires an expectation over the posterior $p_t(\bxi | \bz)$
\begin{align}
    \nabla_\bz \log p_t(\bz) &= \mathop{\mathbb{E}}_{\bxi \sim p_t(\bxi | \bz)}\left[\nabla_\bz \log p_t(\bz | \bxi)\right] \label{eq:score_identity}\\
    &= \frac{\mathbb{E}[\bx | \bz, t] - \bz}{t^2} \label{eq:clean_score}
\end{align}
where $\mathbb{E}\left[\bx | \bz, t\right]$ is the mean of $p_t(\bxi | \bz)$. 

Since calculating posterior probabilities via \cref{eq:posterior} involves summing over $\mathcal{D}$ to find $p_t(\bz)$, exact evaluation of \cref{eq:clean_score} is computationally prohibitive outside of  small data regimes. Instead, most methods which require the score function must use estimators of various kinds. In the following sections, we highlight the use of score estimators in diffusion and consistency models. 

\subsection{Diffusion Models}

Diffusion models are a class of generative models which can generate samples through numerical integration of \cref{eq:pfode} from $T$ to $t_{\mathrm{min}} \approx 0$ with initial value $\bz \sim \pi(\mathbf{\bz})$. 

In place of the exact score function discussed previously, diffusion models use learned neural network estimators of the marginal score $\mathbf{s}_\theta(\mathbf{z}, t)$ to generate samples. Following \cref{eq:clean_score}, $\mathbf{s}_\theta$ can be parameterized as $\mathbf{s}_\theta(\bz, t) = \frac{D_\theta(\bz, t) - \bz}{t^2}$ where $D_\theta$ is a learned estimator of the posterior mean $\mathbb{E}[\bx | \bz, t]$. Training is performed via stochastic gradient descent on a denoising score matching objective $\mathcal{L}_{DSM} = \mathbb{E}_t \left[ \lambda(t) \mathcal{L}_t \right]$ with weights $\lambda(t)$. $\mathcal{L}_{t}$ is given by 
\begin{equation}
    \mathcal{L}_t = \mathbb{E}_{\bz, \bxi \sim p_t(\bz, \bxi)}\left[\lVert \mathbf{D}_\theta\left(\mathbf{z}, t\right) - \bxi \rVert_2^2 \right]. \label{eq:DSM}
\end{equation}
The minimizer of \cref{eq:DSM} is $D^\star_\theta\left(\bz, t\right) = \pmean$ \cite{karras2022elucidating}. However, during stochastic gradient descent optimization, per-batch loss is calculated using $(\bxi, \bz)$ tuples drawn from the joint distribution instead of against the true posterior mean $\mathbb{E}[\bx | \bz, t]$. Minimizing the loss with individual $\bxi$  converges to $\pmean$ because the joint distribution from which the tuples are drawn $p_t(\bz, \bxi)$ is proportional to $ p_t(\bxi | \bz)$. Due to this relation, each $\bxi$ can be seen as a single-sample Monte Carlo estimator of the posterior mean \cite{xu2023stable}.
\begin{figure*}[ht!]
    \centering
    \includegraphics[width=\textwidth]{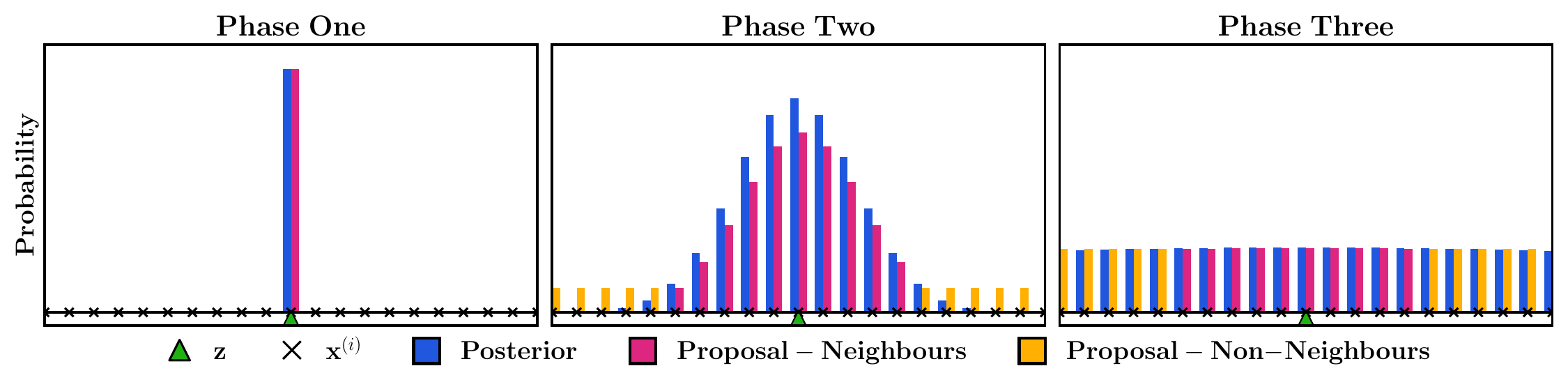}
    \caption{Illustration of our proposal and the posterior across three phases of a toy 1D diffusion process. \textbf{Left:} For small $t$, the posterior probability is concentrated on the single, closest element to $\bz$. \textbf{Middle:} For intermediate $t$, we upper bound the posterior probability for non-neighbour elements, resulting in under weighting neighbours. \textbf{Right:} As $t$ becomes large, the posterior approaches a uniform distribution and the proposal matches the posterior well.}
    \label{fig:posterior_illustration}
\end{figure*}

\subsection{Consistency Models}
Sampling from diffusion models requires repeated evaluation of the score estimator $s_\theta(\bz, t)$. To address this shortcoming, consistency models \cite{song2023consistency} learn a one step mapping between $\pi(\bz)$ and the data distribution.

\Cref{eq:pfode} defines trajectories which map between any marginal distribution $p_t(\bz)$ and  $p_{t_{\mathrm{min}}}(\bz) \approx \pdata$, where $t_{\mathrm{min}} \approx 0$ is used for numerical stability. These mappings $\bm{f} : (\bz, t) \rightarrow \bx$ are known as \emph{consistency functions} because the value $\bm{f}(\bz,t) = \bx^\star$  is consistent for all $(\bz,t)$ on a common PF-ODE trajectory. Consistency models \cite{song2023consistency} exploit this self-consistency property to learn $\bm{f}_{\bm{\theta}}(\bz, t)$, a neural network parameterized approximation to the true consistency function $\bm{f}$.
In practice, consistency models are trained by optimizing the consistency matching objective $\mathcal{L}_{CM} = \mathbb{E}_{t_i} \left[ \lambda(t_i) \mathcal{L}_{t_i} \right]$ where
\begin{equation}
    \mathcal{L}_{t_i} = \mathbb{E}\left[\bm{d}\left(\student\left(\bz, t_i\right), \teacher\left(\hat{\bz}, t_{i-1}\right)\right) \right]. \label{eq:CM}
\end{equation}
The consistency matching objective is optimized with respect to $\theta$ over $N$ discrete time steps $t_{\mathrm{min}} = t_1 < \ldots < t_N = T$, with an expectation over $\bxi \sim \pdata$, $i \sim p(i)$ and $\bz \sim p_{t_i}(\bz | \bx)$. Here $p(i)$ is a categorical distribution over integers $i \in [2, N]$, $\lambda(t_i)$ is a weighting function, and $d(\bx, \mathbf{y})$ is a metric function such as $\ell_2$, LPIPS \cite{zhang2018unreasonable}, or Pseudo-Huber  \cite{charbonnier1997deterministic}.

The loss aims to ensure consistency between the student network $\student$ and the teacher network $\teacher$, evaluated at points $(\bz, t_i)$ and $(\hat{\bz}, t_{i-1})$ which lie on the same PF-ODE trajectory. To obtain $\hat{\bz}$, consistency model training performs a one step integration of \cref{eq:pfode} from from $t_i$ to $t_{i-1}$, setting $\hat{\bz} = \texttt{solver} \left( \bz, t_i, t_{i-1}, \nabla_{\bz} \log p_{t_i}( \bz ) \right)$. An accurate score function estimate is critical to performing this integration, ensuring $\bz$ and $\hat{\bz}$ lie on the same PF-ODE path.

Since $\nabla_{\bz} \log p_{t_i}( \bz )$ is infeasible to calculate exactly outside the small data regime, consistency models are trained with score estimators. \citet{song2023consistency} propose training consistency models with two such estimators. Consistency distillation utilizes a neural network approximator $\nabla_{\bz} \log p_{t_i}( \bz ) \approx s_\theta(\bz, t_i)$ from a pre-trained diffusion ``teacher'' model. To train consistency models without a pre-trained score estimator, consistency training utilizes the estimator $\nabla_{\bz} \log p_{t_i}( \bz ) \approx \frac{\bxi - \bz}{t^2}$ with $\bxi$ and $\bz$ sampled from $p_t(\bz, \bxi).$ Like diffusion training, a single-sample Monte Carlo estimate is used in place of the marginal score.  
\section{Nearest Neighbour Score Estimators} 
    \subsection{SNIS Score Estimation}
Although the score is necessary for both training and sampling from diffusion generative models, there are only two widely used estimators of the score function -- learned networks or single-sample Monte Carlo estimators. Since the latter is required to produce the former, we consider methods to reduce variance in Monte Carlo score estimation.

When calculating the score via \cref{eq:clean_score}, the unknown quantity is $\pmean$ which we will refer to  as $\mu$ hereafter
\begin{equation}
    \mu = \pmean = \sum_{i=1}^N p_t(\bxi | \bz) \ \bxi. \label{eq:posterior_mean}
\end{equation}
If $p_t(\bxi | \bz)$ is available, Monte Carlo estimation can be used to convert the sum over $\mathcal{D}$ in \cref{eq:posterior_mean} into a sum over a batch $S = \left\{\bx_1, \ldots, \bx_n \right\} \sim p_t(\bxi | \bz)$ of size $n$
\begin{equation}
    \hat{\mu}_{\text{MC}} = \frac{1}{n}\sum_{\bxi \in S} \bxi.\label{eq:simple_mc}
\end{equation}
The previously discussed single-sample Monte Carlo estimator for diffusion and consistency training is an example of \cref{eq:simple_mc} with $n=1$ and $\bx_1 \sim p_t(\bxi, \bz) \propto p_t(\bxi | \bz)$. This estimator is unbiased but the variance per-dimension scales with $1/n$ \cite{mcbook}. Unfortunately, drawing $n>1$ samples to reduce estimator variance is challenging because $p_t(\bxi | \bz)$ is expensive to compute. The normalizing constant $p_t(\bz)$ can only be found by summing over $\mathcal{D}$, so naive multi-sample Monte Carlo estimation is impractical.


To increase $n$ and thereby decrease estimator variance, we appeal to importance sampling (IS)\cite{kahn1953methods} taking inspiration from \citet{xu2023stable}. Instead of sampling from $p_t(\bxi | \bz)$, we draw samples from a proposal distribution $q(\bxi)$. We can formulate a new Monte Carlo estimator using a batch of $n$ samples drawn from the proposal $S_q = \left\{\bx_1, \ldots, \bx_n\right\} \sim q(\bxi)$ as 
\begin{equation}
    \hat{\mu}_{\text{IS}} = \frac{1}{n} \sum_{\bxi \in S_q} \frac{p_t\left(\bz | \bxi\right)p_{\text{data}}\left(\bxi\right)} {p_t\left(\bz\right)q(\bxi)} \bxi. \label{eq:IS_estimator}
\end{equation}
\Cref{eq:IS_estimator} no longer requires samples from the posterior and with the finite data approximation, we know $p_{\text{data}}(\bxi)=1/N$. However, finding $p_t(\bz)$ still requires a sum over $\mathcal{D}$. We resolve this issue with self-normalized importance sampling \cite{hesterberg1995weighted}. By also estimating $p_t(\bz)$ with an IS estimator of \cref{eq:marginal} using the same $S_q$, we arrive at a self-normalized importance sampling (SNIS) estimator
\begin{equation}
     \hat{\mu}_{\text{SNIS}} = \frac{\sum_{\bxi \in S_q}w_i \bxi}{\sum_{\bx^{(j)} \in S_q} w_j}, \ w_i = \frac{p_t(\bz | \bxi)}{q(\bxi)}. \label{eq:SNIS_estimator}
\end{equation}
The constant factor $p_{\text{data}}(\bxi)$ cancels because it is present in the numerator of \cref{eq:IS_estimator} and in \cref{eq:marginal}.

Combining \cref{eq:SNIS_estimator} with \cref{eq:clean_score} yields a SNIS estimator of the marginal score. Although the SNIS estimator has some bias, by drawing $n > 1$ samples we can greatly reduce the variance of score estimation. 

\subsection{Nearest Neighbour Proposals}
With a defined a framework for reducing score estimator variance, we must next determine a suitable proposal distribution $q(\bxi)$. To identify a good proposal, we start by analyzing the variance of \cref{eq:SNIS_estimator} \cite{mcbook}. The diagonal of the SNIS score estimator covariance is
\begin{equation}
\begin{split}
     \text{Diag} & \left(\text{Cov}\left(\frac{\hat{\mu}_{\text{SNIS}} - \bz}{t^2} \right) \right)\\
    &= \frac{1}{nt^4}\sum_{i=1}^N \frac{p_t\left(\bxi | \bz \right)^2}{q\left(\bxi\right)} \left( \bxi - \mu \right)^{\circ 2}
\end{split} \label{eq:SNIS_Var}
\end{equation}
Where ${}^{\circ2}$ indicates the Hadamard power operator.

Examining \cref{eq:SNIS_Var}, we can infer that an appropriate proposal distribution $q(\bxi)$ is important for estimating $\pmean$ with low per-dimension variance. Specifically, high importance ratios $p_t(\bxi | \bz) / q(\bxi)$ will increase variance if $(\bxi - \mu)^{\circ 2}$ is non-zero.

For univariate SNIS estimators, the optimal proposal $q^\star(x)$ for a target distribution $p(x)$ with true mean $\mu = \mathbb{E}_p[x]$ is $q^\star\left(x\right) \propto p\left(x\right) \left|x - \mu \right|$ \cite{kahn1953methods}. However, $q^\star$ is not directly applicable to our problem because $\bxi$ is multivariate and the true mean $\mu=\pmean$ is unknown.

Although we cannot make immediate use of the univariate optimal proposal, we note that $q^\star$ is proportional to the target distribution $p(x)$. We therefore hypothesize that a suitable choice for our proposal is to match the true posterior $ p_t(\bxi | \bz)$ as closely as possible. Because $p_t(\bxi | \bz)$ varies with $t$ and $\bz$, we propose also varying our proposal with these variables, denoting our proposal as $q_t(\bxi | \bz)$.

Determining the posterior exactly requires summing over $\mathcal{D}$ to compute $p_t(\bz)$. To more efficiently construct our proposal, we structure $q_t(\bxi | \bz)$ to approximate $p_t(\bxi | \bz)$ by using only $k$ most probable elements of $p_t(\bxi | \bz)$. Let $\mathcal{K}(\bz,t) = (\bx_1, \ldots \bx_k | \bx_i \in \mathcal{D}) \ s.t. \ p_t(\bx_1 | \bz) \geq \ldots \geq p_t(\bx_k | \bz)$ be the ordered set of the $k$ most probable dataset elements under the posterior distribution $p_t(\bxi | \bz)$. The subscript of each $\bx_j \in \mathcal{K}(\bz,t)$ indicates its ordering, where $\bx_k$ is the  element of $\mathcal{K}(\mathbf{z}, t)$ with the lowest probability. For brevity, we refer to $\mathcal{K}(\bz, t)$ as $\mathcal{K}$ hereafter and use it to define our nearest neighbours proposal
\begin{equation}
    q_t(\bx^{(i)} | \ \bz) = \frac{1}{\mathcal{Z}_q} \begin{cases}
        p_t\left(\bz | \bxi\right) & \bxi \in \mathcal{K} \\
        p_t\left(\bz | \bx_k \right) & \bx^{(i)} \notin \mathcal{K}
    \end{cases}. \label{eq:proposal}
\end{equation}

\begin{algorithm}[t!]
   \caption{Nearest Neighbour Score Estimator}
   \label{alg:estimator}
\begin{algorithmic}
    \STATE {\bfseries Input:} $\bz$, $t$, $\mathcal{D}$,  $\mathcal{I}$,$k$,$n$
    \STATE $\mathcal{K}, d \leftarrow \texttt{search}(\mathcal{I}, \bz, k)$
    \STATE $p_t(\bz | \bx_i) \leftarrow \text{exp}(\frac{-d_i^2}{2t^2}) \ \forall \ \bx_i \in \mathcal{K}$ 
    \STATE $\mathcal{Z}_q \leftarrow \sum_{\bx_i } p_t(\bz | \bx_i) + (N-k)p_t( \bz | \bx_k)$
    \STATE $q_t(\bx^{(i)} | \bz) \leftarrow \frac{1}{\mathcal{Z}_q } \begin{cases}
        &p_t (\bz | \bx_i)\ \forall \ \bxi \in \mathcal{K} \\
        & p_t (\bz | \bx_k)\ \forall \ \bxi \notin \mathcal{K}
    \end{cases} $
    \STATE Sample $\left\{\bx_1, \ldots, \bx_n \right\} \sim q_t(\bxi | \bz)$
    \STATE $w_i \leftarrow p(\bz | \bx_i) / q_t(\bx_i | \bz)$
    \STATE $\bar{w}_i \leftarrow \frac{w_i}{\sum_{j=1}^n w_j}$
    \STATE $\hat{\mu}_{\text{KNN}} \leftarrow \sum_{i=1}^n \bar{w}_i \bx_i$
    \STATE \textbf{return} $\frac{\hat{\mu}_{\text{KNN}} - \bz}{t^2}$  
\end{algorithmic}
\end{algorithm}

\begin{figure*}[ht!]
    \centering
    \begin{minipage}[b]{.5\textwidth}
        \centering
        \includegraphics[width=\textwidth]{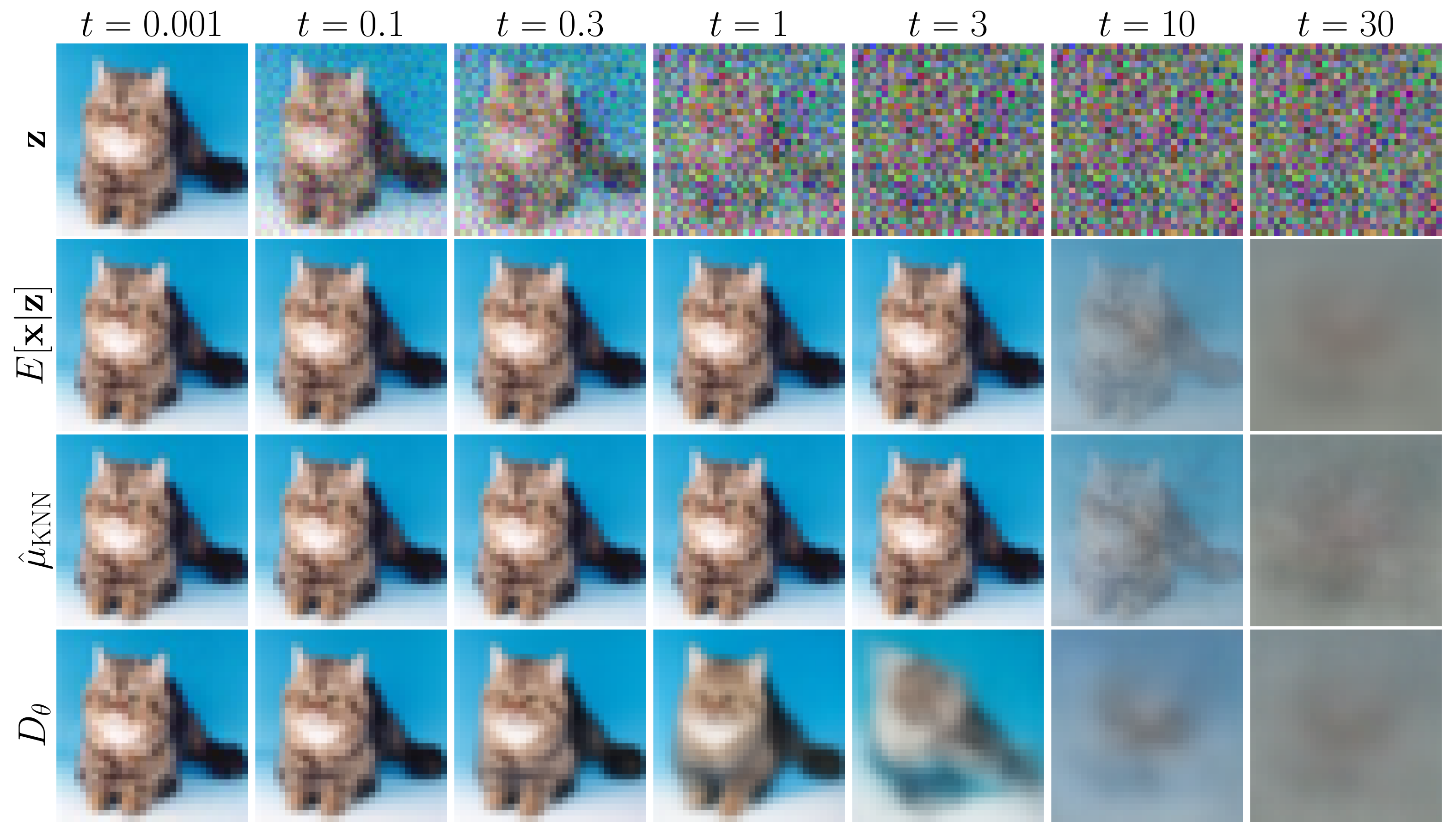}
    \end{minipage}\hfill
    \begin{minipage}[b]{.5\textwidth}
        \centering
        \includegraphics[width=\textwidth]{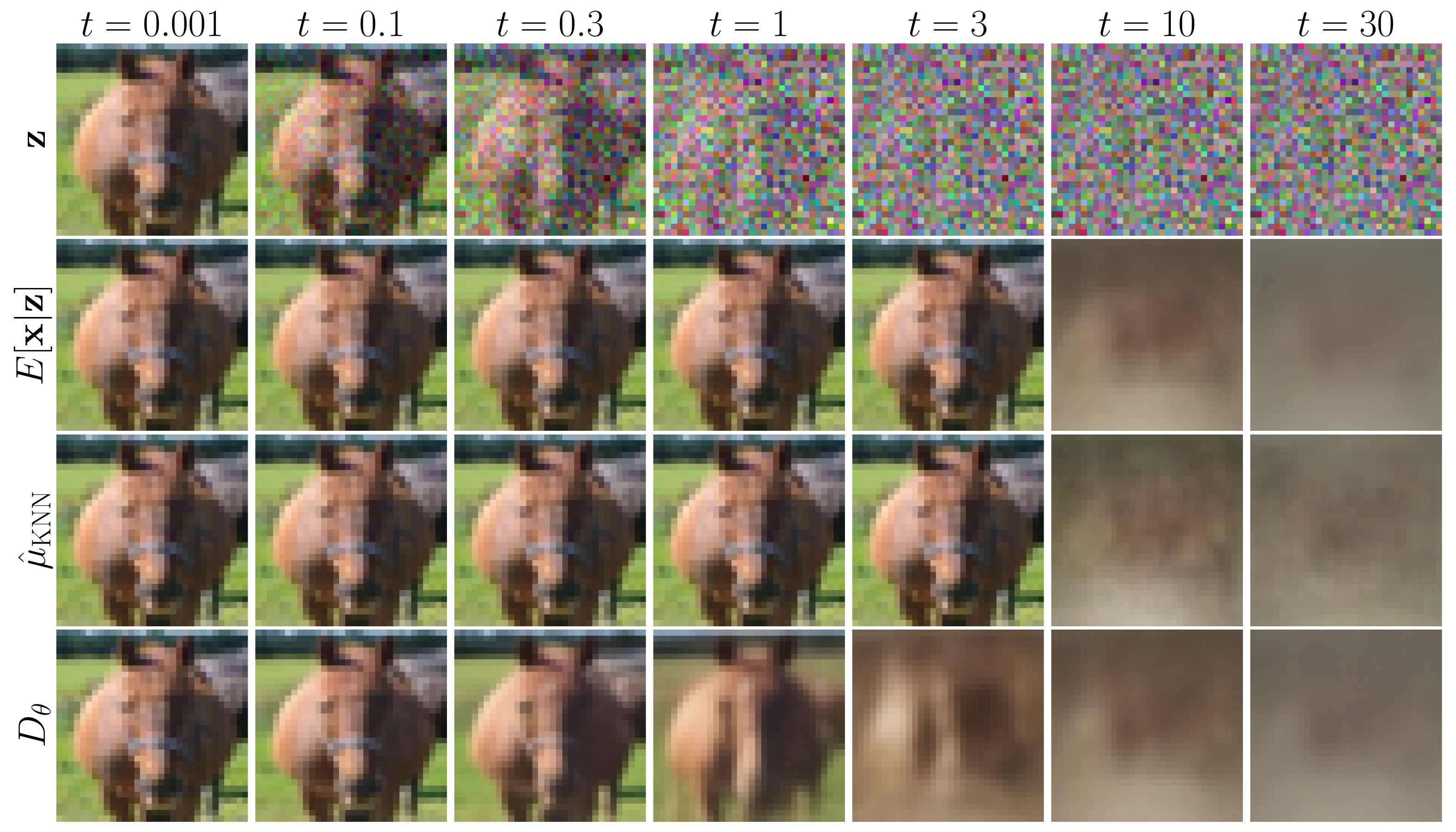}
    \end{minipage}
    \caption{Visualization of posterior mean estimators on CIFAR-10 images. \textbf{Top:} Noisy $\bz$ samples from $p_t(\bz | \bxi)$ for increasing noise levels. \textbf{Second Row:} True posterior mean. \textbf{Third Row:} Posterior mean estimates from our estimator. Our estimator nearly perfectly matches the true mean levels. \textbf{Bottom:} Estimated posterior mean from a trained diffusion model. The diffusion model does not match the high frequency features for low noise levels, but has fewer artifacts at the highest noise level than our method.}
    \label{fig:denoiser_viz}
\end{figure*}

There are several beneficial properties of our nearest neighbours proposal. Most importantly, \cref{eq:proposal}, approximates the posterior using only the elements of $\mathcal{K}$, instead requiring a full summation over $\mathcal{D}$. In addition, since $p_t(\bz | \bx) = \mathcal{N}(\bz | \bx, t^2\mathbf{I})$ is Gaussian, and the size of the $\mathcal{D}$ is known, the normalizing constant $\mathcal{Z}_q = \sum_{i=1}^k p_t(\bz | \bx_i) + (N-k)p_t(\bz | \bx_k)$ is found easily. Finally, the shape of $q_t(\bxi | \bz)$ matches the shape of $p_t(\bxi | \bz)$, as demonstrated in \cref{fig:posterior_illustration}.


Examining \cref{fig:posterior_illustration}, for $\bxi \in \mathcal{K}$, the proposal is proportional to the posterior with $\mathcal{Z}_q q_t(\bxi | \bz) = p_t(\bz)p_t(\bxi | \bz)$. In general, $p_t(\bz) \leq \mathcal{Z}_q$ with equality in two cases. When $p_t(\bz | \bx_k) = 0$ or $k = N$, the posterior is fully concentrated on $\mathcal{K}$ and $p_t(\bz) = \sum_{i=1}^N p_t(\bz | \bxi) = \sum_{i=1}^k p_t(\bz | \bx_i)$. In these cases, our nearest neighbour proposal exactly matches the posterior distribution.

When $p_t(\bz | \bx_k) > 0$ and $k < N$, the likelihood $p_t(\bz | \bx_k)$ upper bounds the likelihood for all $\bxi \notin \mathcal{K}$. In this case, we underweight the probability for $\bxi \in \mathcal{K}$ and assign more mass to the tails of the proposal distribution, as seen in \cref{fig:posterior_illustration}b.

\Cref{eq:proposal} requires access to the $k$ most probable elements of the posterior. To quickly identify $\mathcal{K}$, we rely on the Gaussian nature of diffusion processes. Since $p_t(\bxi | \bz) \propto p_t(\bz | \bxi)$ and $p_t(\bz | \bxi)$ is Gaussian, the $k$ most probable elements of the posterior are equivalent to the $k$ elements of $\mathcal{D}$ with the smallest $\ell_2$ distance to $\bz$. Therefore, we transform the difficult problem of finding the $k$ most likely elements of the posterior into the easier problem of identifying the $k$ nearest neighbours of $\bz$ in Euclidean space. 

To determine $\mathcal{K}$ in practice, we utilize FAISS \cite{johnson2019billion} to perform fast $k$ nearest neighbour search over $\mathcal{D}$. To construct $q_t(\bxi | \bz)$ for a given $\bz$, we first use FAISS to find the nearest neighbours of $\bz$. Helpfully, FAISS returns both the nearest neighbour set $\mathcal{K}$ and their $\ell_2$ distances $d \in \mathbb{R}^k$ to $\bz$. Using $d$, we compute the forward likelihood for each element of $\mathcal{K}$, and use \cref{eq:proposal} to compute the proposal. We can then use the proposal to estimate the score via \cref{eq:SNIS_estimator}. \Cref{alg:estimator} outlines our full nearest neighbour score estimation algorithm.
\section{Estimator Performance}
    \begin{figure*}[ht!]
    \centering
    \begin{minipage}[b]{0.7317073170731707\textwidth}
        \centering
        \includegraphics[width=\textwidth]{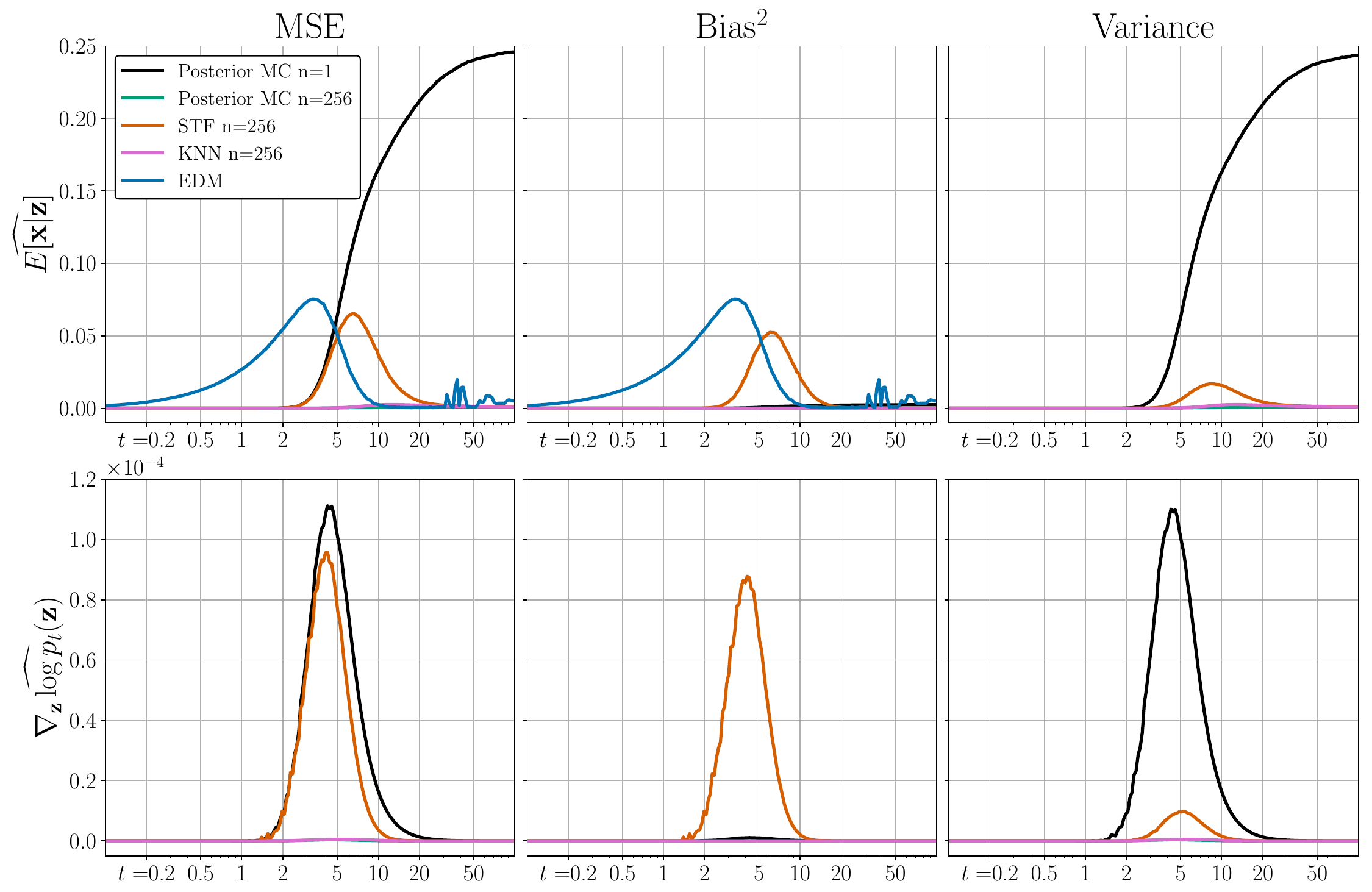}
    \end{minipage}\hfill
    \begin{minipage}[b]{0.2682926829268293\textwidth}
        \centering
        \includegraphics[width=\textwidth]{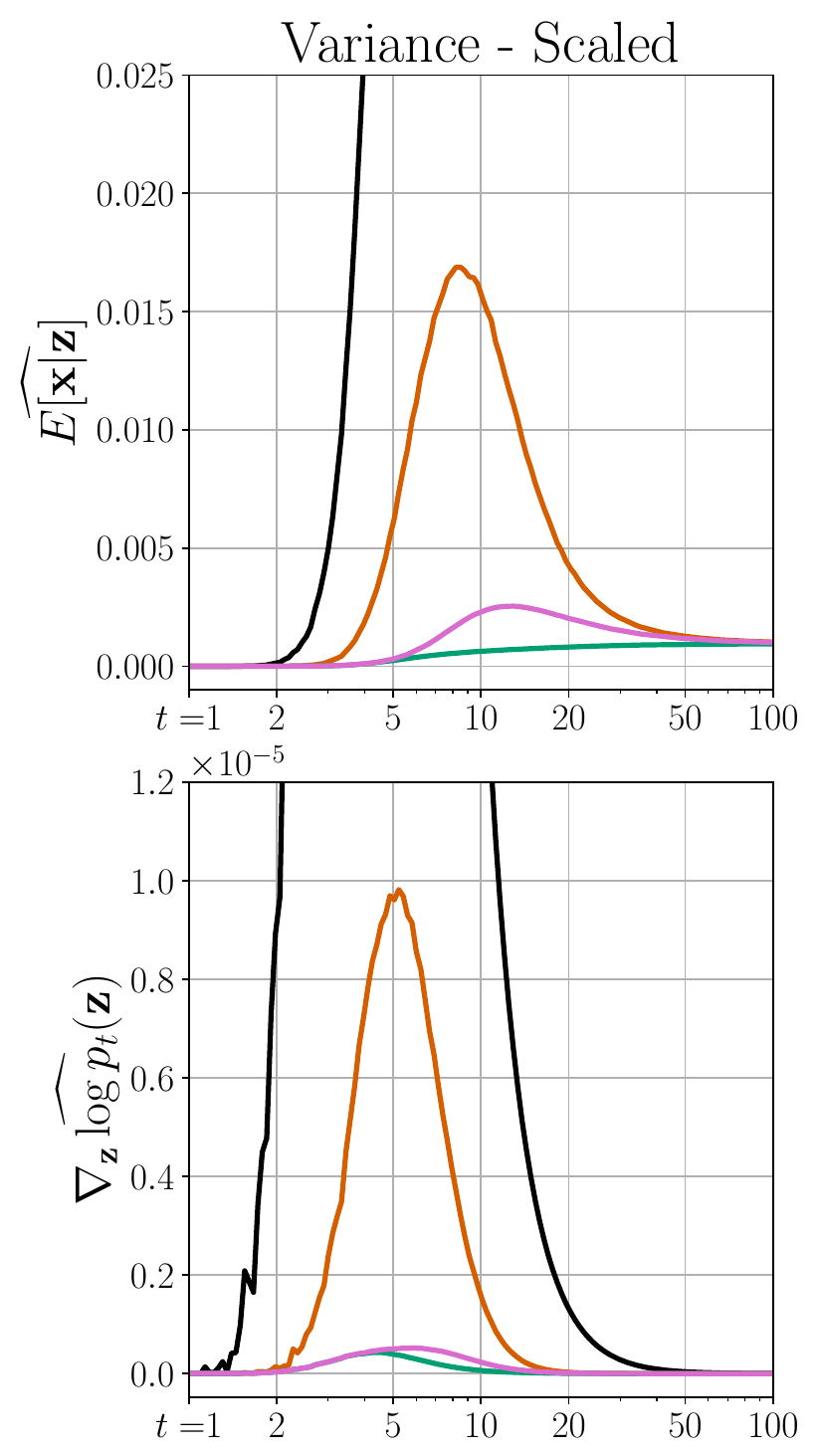}
    \end{minipage}
    \caption{Estimator performance on CIFAR-10. Our estimator reduces bias and variance to near zero, significantly outperforming even a network score estimator. In contrast, STF reduces variance but has significant bias for intermediate $t$.}
    \label{fig:estimator_perf}
\end{figure*}
\subsection{Analysis} \label{sec:analysis}

Motivated by our goal of reducing the variance of the score estimate, we derive two theoretical bounds on the trace of the covariance of $\hat{\mu}_{\text{KNN}}$ -- our posterior mean estimator. We form our bounds using the trace of the covariance of a Monte Carlo estimator $\hat{\mu}_{MC}$ and another SNIS estimator with $q_t(\bxi | \bz) = \frac{1}{N}$ which we denote with $\hat{\mu}_{U}$. We note that $\hat{\mu}_{U}$ is is only asymptotically equivalent to STF \cite{xu2023stable} because in addition to a uniform $q_t(\bxi | \bz)$, STF also deterministically includes the source $\mathbf{x}^{(i)}$ in their sample batch $S$.

\begin{theorem}
\label{thm:knn_bound}
Let $t \in (0, \infty)$, $\hat{\mu}_{\text{MC}}$ be the Monte Carlo estimator defined by \cref{eq:simple_mc}, and $\hat{\mu}_{\text{KNN}}$ be the estimator described by \cref{eq:SNIS_estimator} with proposal given by \cref{eq:proposal}. Then for a fixed $n \geq 1$,
\begin{equation}
    \text{Tr}\left(\text{Cov}\left(\hat{\mu}_{\text{KNN}}\right)\right)\leq \frac{\mathcal{Z}_q}{p_t(\bz)}\text{Tr}\left(\text{Cov}\left(\hat{\mu}_{\text{MC}}\right)\right).
\end{equation}
\end{theorem}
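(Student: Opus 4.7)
The plan is to reduce the matrix trace inequality to a uniform pointwise bound on the importance ratio $p_t(\bxi\mid\bz)/q_t(\bxi\mid\bz)$. Starting from equation (\ref{eq:SNIS_Var}) and using that $\operatorname{Cov}((\hat{\mu}_{\text{KNN}} - \bz)/t^2) = t^{-4}\operatorname{Cov}(\hat{\mu}_{\text{KNN}})$ (since $\bz$ and $t$ are held fixed), summing the per-dimension formula across all $d$ coordinates gives
\begin{equation*}
\operatorname{Tr}\bigl(\operatorname{Cov}(\hat{\mu}_{\text{KNN}})\bigr) = \frac{1}{n}\sum_{i=1}^N \frac{p_t(\bxi\mid\bz)^2}{q_t(\bxi\mid\bz)}\lVert \bxi - \mu \rVert_2^2,
\end{equation*}
while the sample-mean variance of i.i.d.\ posterior draws yields
\begin{equation*}
\operatorname{Tr}\bigl(\operatorname{Cov}(\hat{\mu}_{\text{MC}})\bigr) = \frac{1}{n}\sum_{i=1}^N p_t(\bxi\mid\bz)\,\lVert \bxi - \mu \rVert_2^2.
\end{equation*}
Since each weight $p_t(\bxi\mid\bz)\lVert\bxi-\mu\rVert_2^2$ is nonnegative, the theorem follows term-by-term once I show the uniform pointwise bound
\begin{equation*}
\frac{p_t(\bxi\mid\bz)}{q_t(\bxi\mid\bz)} \;\le\; \frac{\mathcal{Z}_q}{p_t(\bz)} \qquad \text{for every } i = 1, \ldots, N.
\end{equation*}

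To prove this pointwise bound, I would use (\ref{eq:posterior}) together with $p_{data}(\bxi) = 1/N$ to rewrite the target equivalently as $p_t(\bz\mid\bxi)/N \le \mathcal{Z}_q\, q_t(\bxi\mid\bz)$, and then split on the two branches of the proposal definition (\ref{eq:proposal}). If $\bxi \in \mathcal{K}$, then $\mathcal{Z}_q\, q_t(\bxi\mid\bz) = p_t(\bz\mid\bxi)$ and the claim collapses to $1/N \le 1$. If $\bxi \notin \mathcal{K}$, then $\mathcal{Z}_q\, q_t(\bxi\mid\bz) = p_t(\bz\mid\bx_k)$, and the defining property of $\mathcal{K}$ as the $k$ most likely posterior atoms forces $p_t(\bz\mid\bxi) \le p_t(\bz\mid\bx_k)$, so $p_t(\bz\mid\bxi)/N \le p_t(\bz\mid\bx_k)$ as well. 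Substituting the pointwise ratio back into the trace formula for $\hat{\mu}_{\text{KNN}}$ and factoring out $\mathcal{Z}_q/p_t(\bz)$ completes the argument.

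I expect the only subtle point to be conceptual rather than computational: equation (\ref{eq:SNIS_Var}) is the delta-method asymptotic variance of a self-normalized importance sampler rather than a closed-form exact finite-$n$ covariance of the (biased) estimator $\hat{\mu}_{\text{KNN}}$. The cleanest route is therefore to adopt (\ref{eq:SNIS_Var}) as the working definition of $\operatorname{Cov}(\hat{\mu}_{\text{KNN}})$, matching the convention already used in the preceding subsection, so that the term-by-term comparison above is unambiguous. Once that convention is fixed, the proof is essentially a one-line reduction, and in particular no geometric properties of the $\ell_2$ nearest-neighbour search beyond the ordering $p_t(\bz\mid\bx_i) \ge p_t(\bz\mid\bx_k)$ for $i \le k$ are invoked.
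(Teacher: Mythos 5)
Your proof is correct and organizes the argument more cleanly than the paper's own version. The paper expands the right-hand side of \cref{eq:SNIS_Var}, splits the sum over $\mathcal{K}$ and its complement, substitutes \cref{eq:proposal}, and then adds and subtracts $\sum_{\bxi \notin \mathcal{K}} p_t(\bxi\mid\bz)(\bxi - \pmean)^{\circ 2}$ so as to isolate a non-positive remainder (non-positive precisely because $p_t(\bz\mid\bxi) \leq p_t(\bz\mid\bx_k)$ for $\bxi \notin \mathcal{K}$), which it then discards. You instead establish the scalar inequality $p_t(\bxi\mid\bz)/q_t(\bxi\mid\bz) \leq \mathcal{Z}_q/p_t(\bz)$ uniformly over $i$ and apply it termwise. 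Both routes pivot on the same geometric fact that the forward likelihood of anything outside $\mathcal{K}$ is dominated by $p_t(\bz\mid\bx_k)$, but your framing is modular and makes the mechanism explicit: SNIS variance is controlled by a bounded importance ratio, which is the standard way such bounds are argued. Your observation that \cref{eq:SNIS_Var} is the delta-method asymptotic covariance of a biased self-normalized estimator, and that it should be adopted as the operative definition of $\text{Cov}(\hat{\mu}_{\text{KNN}})$ for the purposes of the theorem, is correct and matches the convention the paper uses implicitly throughout \cref{sec:analysis}.

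One refinement worth noting: as you yourself observe, for $\bxi \in \mathcal{K}$ the pointwise bound holds with $1/N$ slack, and in fact $p_t(\bxi\mid\bz)/q_t(\bxi\mid\bz) = \mathcal{Z}_q / (N\, p_t(\bz))$ there exactly, with the same $\mathcal{Z}_q / (N\, p_t(\bz))$ dominating the ratio on the complement via $p_t(\bz\mid\bxi) \le p_t(\bz\mid\bx_k)$. Retaining the $p_{data}(\bxi) = 1/N$ factor through the computation therefore yields the sharper constant $\mathcal{Z}_q / (N\, p_t(\bz))$ in place of $\mathcal{Z}_q / p_t(\bz)$. That tighter constant is the one consistent with the paper's discussion immediately after the theorem, where the ratio is said to approach unity as $p_t(\bxi\mid\bz)$ concentrates on $\mathcal{K}$ or becomes uniform over $\mathcal{D}$; with the definition of $\mathcal{Z}_q$ stated in the body (no $1/N$), that ratio actually tends to $N$ in both limits. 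This is a normalization ambiguity in the paper's $\mathcal{Z}_q$ rather than an error in your argument, but your proof carried out tightly establishes the sharper bound essentially for free.
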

\begin{proof}
We defer the proof to \cref{ap:knn_upper}.
\end{proof}
In general, our estimator's trace-of-covariance is larger than the simple Monte Carlo estimator's trace-of-covariance. The ratio of the two is upper bounded by the factor $\frac{\mathcal{Z}_q}{p_t(\bz)}$ -- the ratio of the normalizing constants of the proposal and the posterior distributions. This ratio will be exactly unity when $p_t(\bxi | \bz)$ is fully concentrated on $\mathcal{K}$ (as is true when $t$ is small or $k=N$). The ratio will also approach unity as $p_t(\bxi | \bz)$ approaches a uniform distribution over $\mathcal{D}$. In any of these cases, the trace-of-covariance of our estimator will approach the trace-of-covariance of multi-sample posterior Monte Carlo.
\begin{theorem}
\label{thm:knn_vs_stf}
Let $\hat{\mu}_{\text{MC}}$ be defined by \cref{eq:simple_mc}. Let $\hat{\mu}_{\text{KNN}}$ and $\hat{\mu}_{\text{U}}$ be two estimators defined by \cref{eq:SNIS_estimator} with proposals given by \cref{eq:proposal} and $q_t(\bxi | \bz) = \frac{1}{N}$ respectively. Then, for $t \in (0, \infty)$ and a fixed $n \geq 1$,
\begin{align*}
     \text{Tr}\left(\text{Cov}\left(\hat{\mu}_{\text{KNN}}\right) \right) &\leq \\
    \bigg( & 1 - \frac{\sum_{\bxi \notin \mathcal{K}} p_t(\bz | \bxi)}{p_t(\bz)} \bigg) \text{Tr}\left(\text{Cov}\left(\hat{\mu}_{\text{MC}}\right)\right) \\
    + &\frac{(N-k)}{N} \text{Tr}\left(\text{Cov}\left(\hat{\mu}_{\text{U}}\right)\right).
\end{align*}
\end{theorem}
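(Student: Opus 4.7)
My approach starts from the per-dimension variance formula in \cref{eq:SNIS_Var}, which yields
$$\text{Tr}(\text{Cov}(\hat{\mu}_{\text{KNN}})) = \frac{1}{n}\sum_{i=1}^N \frac{p_t(\bxi|\bz)^2}{q_t(\bxi|\bz)}\|\bxi - \mu\|^2.$$
The plan is to establish a \emph{pointwise} upper bound on each summand of the form $\alpha\, p_t(\bxi|\bz) + \beta\, p_t(\bxi|\bz)^2$. Summing such a bound and dividing by $n$ would reconstruct $\alpha\,\text{Tr}(\text{Cov}(\hat{\mu}_{\text{MC}}))$ plus $(\beta/N)\,\text{Tr}(\text{Cov}(\hat{\mu}_{\text{U}}))$, since $\hat{\mu}_{\text{U}}$ uses $q \equiv 1/N$ and hence $\text{Tr}(\text{Cov}(\hat{\mu}_{\text{U}})) = (N/n)\sum_i p_t(\bxi|\bz)^2\|\bxi - \mu\|^2$. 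Matching the theorem statement then demands $\alpha = P_{\mathcal{K}} := \sum_{\bxj \in \mathcal{K}} p_t(\bxj|\bz)$ and $\beta = N-k$.

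The key algebraic observation is that the normalizer $\mathcal{Z}_q = \sum_{\bxj \in \mathcal{K}} p_t(\bz|\bxj) + (N-k) p_t(\bz|\bx_k)$ naturally decomposes into a posterior-like piece and a uniform-like residual. Dividing through by $N p_t(\bz)$ and using $p_t(\bxj|\bz) = p_t(\bz|\bxj)/(N p_t(\bz))$ gives
$$\frac{\mathcal{Z}_q}{N p_t(\bz)} = P_{\mathcal{K}} + \frac{(N-k) p_t(\bz|\bx_k)}{N p_t(\bz)},$$
which already exposes the two coefficients needed.

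With this split, I handle the two cases of the proposal from \cref{eq:proposal} in parallel. For $\bxi \in \mathcal{K}$, substituting $q_t(\bxi|\bz) = p_t(\bz|\bxi)/\mathcal{Z}_q$ reduces the summand to $(\mathcal{Z}_q/(N p_t(\bz)))\,p_t(\bxi|\bz)$; applying the split above and then using $p_t(\bz|\bx_k) \leq p_t(\bz|\bxi)$ (the defining ordering of $\mathcal{K}$) converts the residual factor $p_t(\bz|\bx_k)/(Np_t(\bz))$ into $p_t(\bxi|\bz)$, producing the target bound. For $\bxi \notin \mathcal{K}$, the analogous expansion yields $(\mathcal{Z}_q/p_t(\bz|\bx_k))\,p_t(\bxi|\bz)^2$; applying the reverse ordering $p_t(\bz|\bxi) \leq p_t(\bz|\bx_k)$ lets me trade one factor of $p_t(\bxi|\bz)$ for $p_t(\bz|\bx_k)/(Np_t(\bz))$ inside the posterior-like piece, giving exactly the same pointwise bound as in the $\bxi \in \mathcal{K}$ case.

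Once the uniform inequality $p_t(\bxi|\bz)^2/q_t(\bxi|\bz) \leq P_{\mathcal{K}}\,p_t(\bxi|\bz) + (N-k)\,p_t(\bxi|\bz)^2$ holds for every $i$, summing, dividing by $n$, and rewriting $P_{\mathcal{K}}$ as $1 - \sum_{\bxi \notin \mathcal{K}} p_t(\bxi|\bz)$ matches the coefficient displayed in the theorem. The main obstacle is spotting the correct split of $\mathcal{Z}_q/(Np_t(\bz))$ that routes the excess mass of the truncated proposal into the uniform-SNIS variance term; the two ordering inequalities needed to close the bound both follow immediately from the definition of $\mathcal{K}$ as the top-$k$ set under the posterior, so the remainder is routine bookkeeping.
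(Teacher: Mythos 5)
Your proof is correct, and the pointwise-bound strategy is cleverer and more compact than the paper's argument: where the paper performs a long sequence of add-and-subtract manipulations on the whole sum (splitting off $\mathcal{K}$ terms, reweighting by $p_t(\bz|\bxi)/p_t(\bz|\bx_k)$, and substituting the uniform-trace identity midway), you instead establish the single inequality
$$\frac{p_t(\bxi\mid\bz)^2}{q_t(\bxi\mid\bz)} \;\le\; P_{\mathcal{K}}\,p_t(\bxi\mid\bz) \;+\; (N-k)\,p_t(\bxi\mid\bz)^2, \qquad P_{\mathcal{K}} := \sum_{\bxj\in\mathcal{K}} p_t(\bxj\mid\bz),$$
for every $i$ and then sum. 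I verified both cases: for $\bxi\in\mathcal{K}$ the summand reduces to $p_t(\bxi\mid\bz)\,\mathcal{Z}_q/(Np_t(\bz))$, the split of $\mathcal{Z}_q/(Np_t(\bz))$ isolates $P_{\mathcal{K}}$, and $p_t(\bz\mid\bx_k)\le p_t(\bz\mid\bxi)$ closes the residual; for $\bxi\notin\mathcal{K}$ the opposite ordering $p_t(\bz\mid\bxi)\le p_t(\bz\mid\bx_k)$ does the analogous job. The algebra checks out.

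One discrepancy worth flagging, which is not your error: the coefficient you obtain, $P_{\mathcal{K}} = 1-\sum_{\bxi\notin\mathcal{K}}p_t(\bxi\mid\bz) = 1 - \tfrac{\sum_{\bxi\notin\mathcal{K}}p_t(\bz\mid\bxi)}{Np_t(\bz)}$, is \emph{not} the same as the coefficient $1-\tfrac{\sum_{\bxi\notin\mathcal{K}}p_t(\bz\mid\bxi)}{p_t(\bz)}$ displayed in the theorem; they differ by a factor $N$ in the denominator. However your version is the one that is actually correct. Under the paper's normalization $\sum_i p_t(\bz\mid\bxi)=Np_t(\bz)$, the displayed coefficient equals $1-N+NP_{\mathcal{K}}$, which is negative for large datasets and fails the paper's own sanity check in \cref{sec:analysis}: when $p_t(\bxi\mid\bz)\equiv 1/N$ the text claims the first coefficient becomes $k/N$, which is exactly your $P_{\mathcal{K}}$ but not the displayed quantity (the displayed quantity would give $1-N+k$). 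The same normalization slip appears in the paper's proof at \cref{eq:substitute_reference}, where a factor $1/N$ is dropped when converting $p_t(\bxi\mid\bz)$ to $p_t(\bz\mid\bxi)/p_t(\bz)$. So your derivation recovers the intended theorem; you should just not claim it "matches the coefficient displayed in the theorem" verbatim, and instead note that the displayed coefficient is missing an $N$ in its denominator.
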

\begin{proof}
We defer the proof to \cref{ap:knn_v_stf}.
\end{proof}
Our estimator's trace-of-covariance is upper bounded by the sum of a fixed multiple of  the uniform SNIS estimator's trace-of-covariance and a time-varying multiple of the posterior Monte Carlo trace-of-covariance. We note that when $p_t(\bxi | \bz) = \frac{1}{N} \ \forall \ \bxi \in \mathcal{D}$, then $\text{Tr}\left(\text{Cov}\left(\hat{\mu}_{\text{MC}}\right)\right) = \text{Tr}\left(\text{Cov}\left(\hat{\mu}_{\text{U}}\right)\right)$, the frst term coefficient becomes $\frac{k}{N}$, and the trace-of-covariance of $\hat{\mu}_{\text{KNN}}$ and $\hat{\mu}_{\text{U}}$ are equal. For intermediate $t$ we expect the trace-of-covariance of $\hat{\mu}_{\text{KNN}}$ to be lower than that of $\hat{\mu}_{\text{U}}$ because we expect $\text{Tr}(\text{Cov}(\hat{\mu}_{\text{MC}})) << \text{Tr}(\text{Cov}(\hat{\mu}_{\text{U}}))$ in this region.


\subsection{Empirical Performance} \label{sec:empirical_perf}

To gain insight into the performance of our estimator beyond our theoretical analysis, we empirically compare our estimator against various other score estimators on the CIFAR-10 dataset \cite{krizhevsky2009learning}. We measure estimator performance with average mean squared error
\begin{align}
        \text{MSE} & \left(\hat{\mu}\right)= \frac{1}{d}\mathop{\mathbb{E}}\left\lVert \hat{\mu} - \mu \right\rVert_2^2 = \frac{1}{d}\left(\text{Bias}\left(\hat{\mu}\right)^2 + \text{Var}\left(\hat{\mu}\right)\right). \label{eq:mse}
\end{align}
\Cref{eq:mse} compares the estimated posterior mean $\hat{\mu}$ against the true posterior mean $\mu$,  calculated via \cref{eq:posterior_mean}. MSE is equivalent to the trace-of-covariance metric introduced in \cite{xu2023stable} and discussed in \cref{sec:analysis} scaled by $1/d$. The outer expectation of \cref{eq:mse} is calculated over a large set of $\bxi \sim \mathcal{D}$, with one $\bz \sim p_t(\bz | \bxi)$ for each $\bxi$. For each $\bz$, we average over repeated estimator evaluations. This allows us to decompose MSE into squared bias and variance terms to further investigate estimator performance. When calculating variance and squared bias, we compute the sample mean from the repeated estimator evaluations per-$\bz$. 

In \cref{fig:estimator_perf}, we plot score and posterior mean estimator performance across a range of noise levels. At each $t$, we evaluate the estimators using ten thousand $\bz$ samples, and one hundred estimator evaluations per $\bz$. We compare our method against four others: the single-sample posterior estimator typically used in diffusion and consistency training, a $n=256$ multi-sample posterior Monte Carlo estimator, an STF \cite{xu2023stable} estimator with $n=256$, and EDM -- a pre-trained near-SoTA diffusion model \cite{karras2022elucidating}. As EDM is deterministic, we do not report its variance. Further, we do not plot its score metrics since errors in the posterior mean estimation for small $t$ are amplified in the score by a factor of $1/t^4$. For the multi-sample Monte Carlo estimator, we calculate the posterior by calculating \cref{eq:marginal}.

Examining \cref{fig:estimator_perf}, we note several important results. Firstly, our estimator outperforms the STF estimator in both posterior mean and score estimation across every metric. The difference is especially stark for score estimation, where the peak MSE of our estimator is approximately 100 times better than that of STF. We hypothesize that our excellent performance is because for an intermediate range of $t$, $q_t(\bxi | \bz)$ matches the posterior nearly perfectly. When $p_t(\bxi | \bz)$ is fully concentrated on a single element, the variance of the SNIS estimators is zero because $p_t(\bxi | \bz)=1 \implies (\bxi - \pmean)^2=0 $ and  $(\bxi - \pmean)^2 > 0  \implies p_t(\bxi | \bz)=0$. However, as $t$ increases, the number of elements of $\mathcal{D}$ with non-zero $p_t(\bxi | \bz)$ also increases, resulting in a non-zero estimator variance. In these cases, we believe the variance is dominated by the importance ratio $p_t / q_t$. Compared to STF, our proposal has lower importance ratios when the posterior mass is concentrated on less than $k$ elements.

Our second finding is that the STF estimator has a significant bias for intermediate $t$. We believe this bias is because STF deterministically includes the generating $\bxi$ in its sample batch, but does not account for this when computing the SNIS weights. Although \citet{xu2023stable} prove that as $n \rightarrow \infty$, the bias of STF converges to 0, we see that the bias of the estimator is not negligible for a practical choice of $n$.

Finally, we find that our estimator significantly outperforms even a near-SoTA diffusion model for most $t$. Surprisingly, we find that peak MSE for the diffusion model does not coincide with the Monte Carlo methods, and instead occurs at substantially lower $t$ than the peak for both SNIS estimators.  

We further compare our method and EDM qualitatively in \cref{fig:denoiser_viz}. We find our method better estimates the posterior mean for most $t$. However, we find some evidence of artifacts at high $t$ which may be mitigated by increasing $n$.
{\section{Consistency Training} \label{sec:consistency}}
    \subsection{Consistency Models}
We use our estimator to train consistency models on CIFAR-10 \cite{krizhevsky2009learning}. Specifically, we replace the single-sample score estimator used to produce $\hat{\bz}$ in \cref{eq:CM} with SNIS score estimators. We use the improved consistency training (iCT) techniques proposed by \citet{song2023improved}, comparing a baseline iCT model with models trained with an STF \cite{xu2023stable} and a KNN score estimator. For both SNIS score estimators, we use $n=256$, with a $k=2056$ for the KNN proposal. We measure performance using Frechet Inception Distance \cite{heusel2017gans} and Inception Score \cite{salimans2016improved}.

\begin{table}[t!]
    \centering
    \captionsetup{skip=5pt}
    \caption{Single step unconditional image generation performance on CIFAR-10. Bold indicates the best result within a section, underline indicates the best result overall.}
    \begin{tabularx}{\linewidth}{L r  r }
        \hline
         \textbf{Method}&  \textbf{FID}$\downarrow$&  \textbf{IS}$\uparrow$ \\
         \hline
         CD \cite{song2023consistency} & 3.55 & 9.48 \\
         CT \cite{song2023consistency} & 8.70 & 8.49 \\
         iCT \cite{song2023improved} & \textbf{\underline{2.83}} & \textbf{\underline{9.54}} \\
         \hline
         iCT (Reimplemented) & 3.67 & 9.45 \\
         iCT + STF \cite{xu2023stable} & 3.79 & \textbf{9.50} \\
         iCT + KNN (ours) & \textbf{3.59} & 9.49 \\
         \hline
    \end{tabularx}
    \label{tab:cifar_fid}
\end{table}
From \cref{tab:cifar_fid}, we observe consistency training with KNN score estimators results in lower FID and higher Inception Score compared to our re-implementation of iCT \cite{song2023improved}. While we are unable to reproduce the results of \citep{song2023improved} outright (training code was not available at the time of writing) we do improve over our re-implementation of iCT. Surprisingly, we find that using the STF score estimator \cite{xu2023stable} results in worse FID than both our method and the iCT baseline. We hypothesize STF's bias may contribute to this degradation. 
\begin{figure}[t!]
    \centering
    \includegraphics[width=\linewidth]{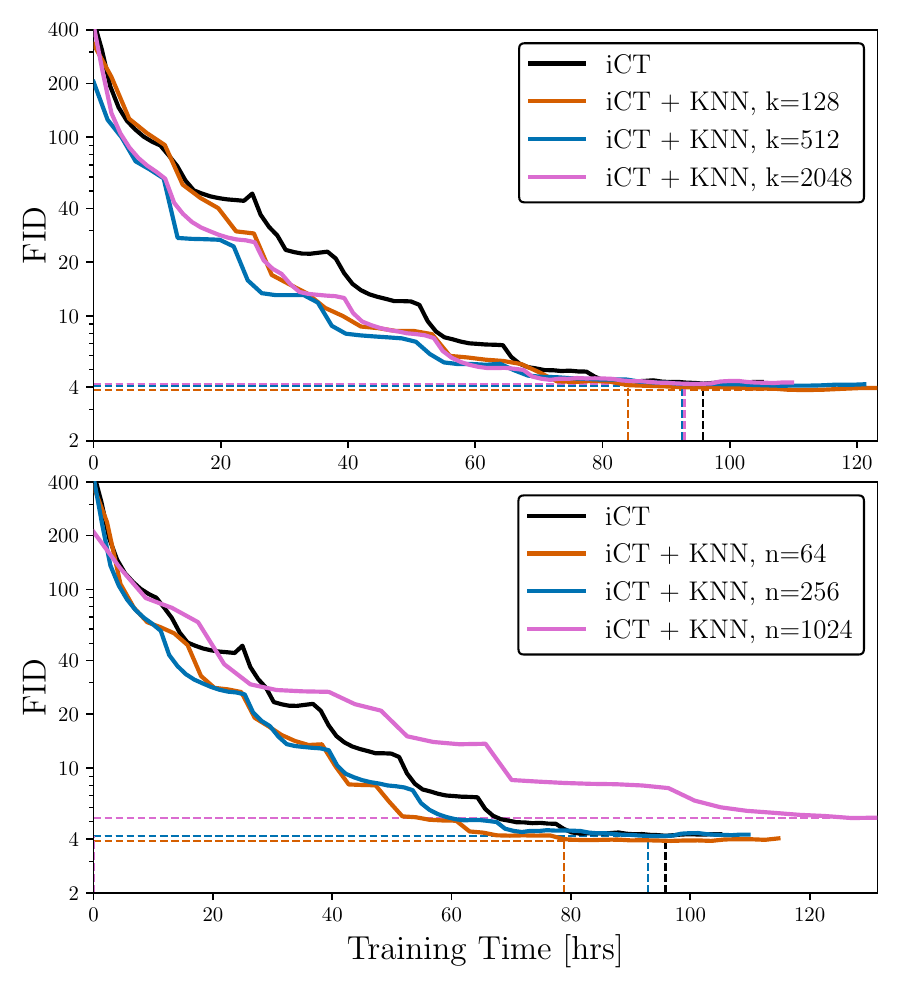}
    \caption{Impact of KNN score estimator performance on Consistency Training. Horizontal lines indicate minimum FID per model, vertical lines indicate when FID improves on baseline iCT. \textbf{Top:} Effect of varying KNN search size. \textbf{Bottom:} Effect of varying estimator sample size.}
    \label{fig:ablation}
\end{figure}
\subsection{Ablations}
We sweep over $k$ and $n$ to investigate their impact on downstream model performance. For our ablation, we train consistency models with the same number of iterations but halve the batch size to reduce computational cost.

\Cref{fig:ablation} highlights that for most hyperparameter choices, our method improves the convergence rate of consistency training. KNN models generally reach lower FID in less time than the iCT baseline. However, the training time required to achieve optimal FID is similar across all methods. Notably, we find reducing $n$ and $k$ improves sample FID, even though this increases estimator variance (\cref{ap:estimator_ablation}). We suspect score variance may have a regularizing effect, and suggest future work investigate further hyperparameter tuning to achieve optimal performance.
\section{Diffusion Sampling}
    \begin{figure}
    \centering
    \includegraphics[width=\linewidth]{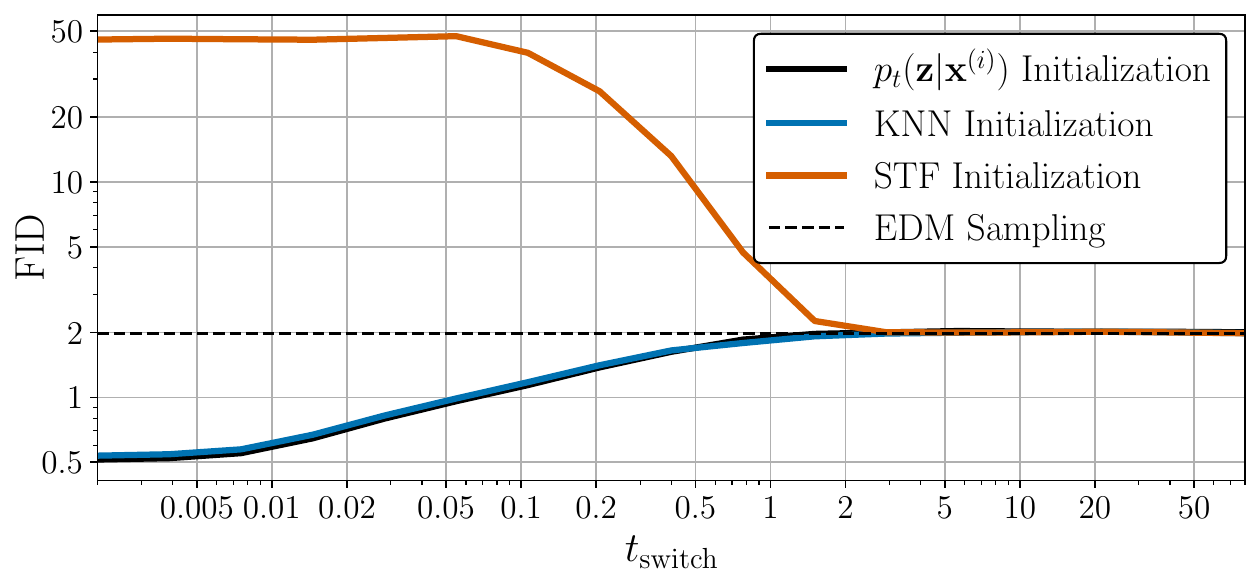}
    \caption{Comparison of sample FID versus $t_{\mathrm{switch}}$ in our hybrid sampling approach. Initializing diffusion sampling with KNN PF-ODE integration ith our score estimator yields identical performance to forward process initialization. For $t<2$, STF is unsuitable for PF-ODE integration.}
    \label{fig:diffusion_init}
\end{figure}
Motivated by the low error of our estimator compared to network approximators (\cref{fig:estimator_perf}) we investigate the effect of replacing a diffusion model with our estimator for diffusion sampling. To this end, we generate samples via PF-ODE integration using a hybrid sampling approach. For $T < t < t_{\mathrm{switch}}$, we integrate \cref{eq:pfode} with our score estimate using the Huen solver proposed by \citet{karras2022elucidating}. After $t_{\mathrm{switch}}$, we finish sampling with a pre-trained EDM score estimator.

\Cref{fig:diffusion_init} plots the quality of the resulting hybrid samples against $t_{\mathrm{switch}}$. We compare our estimator, STF, and a baseline which initializes EDM sampling using samples from the forward likelihood. Naively, our estimator is able to drive FID to an unprecedented minimum of 0.5. However, this performance is matched by forward process initialization, indicating that integrating the PF-ODE with our estimator is equivalent to drawing samples from the training dataset. This means our method cannot be used on its own as a generative model, as it does not produce generalization.

As shown by \cref{sec:consistency} and by \citet{xu2023stable}, training diffusion generative models with improved score estimates does not degrade generalization. However, when sampling using an estimator which closely matches the \emph{exact minimizer} of the diffusion training objective (\cref{eq:DSM}), we see that generalization disappears. This reinforces the findings of \cite{yi2023generalization}: that the lauded generalization properties of diffusion models can be attributed to bias in the network score estimator caused by architecture, optimization procedure or other factors.

Although integrating the PF-ODE using our estimator does not produce generalization, we still closely match the forward process initialization, indicating that our method is capable of general PF-ODE integration. This finding suggests that using our estimator, it may be possible to convert distillation methods \cite{salimans2022progressive, sauer2023adversarial, lu2023cm} into from-scratch training procedures, opening promising avenues of future research.

In contrast, when $t_{\mathrm{switch}} < 2$, the FID of STF increases, indicating it is unsuitable for general purpose PF-ODE integration. We hypothesize that this increase is because for small $t$, the STF estimator's performance is reliant on including $\bxi$ used to generate $\bz$ in the SNIS batch. For general PF-ODE integration where there is no such $\bxi$, the uniform proposal of STF is unable to produce a suitable score estimate.

\section{Related Work}
    \textbf{Generative Modelling with Diffusion Score Estimators}
Diffusion models \cite{sohl2015deep, ho2020denoising, song2020score} have become a widely utilized class of generative models which use learned score estimators to generate data across a variety of fields. Learned score estimators are also used to train downstream models with distillation \cite{salimans2022progressive}, consistency matching \cite{song2023consistency, kim2023consistency}, or to regularize the training of generative adversarial models \cite{sauer2023adversarial,lu2023cm}. 

\textbf{Importance Sampling for Generative Model Training}
Multiple methods have used importance sampling to reduce variance in generative model training. Reweighted Wake-Sleep \cite{bornschein2014reweighted} uses importance sampling to improve variance in the sleep-phase of the wake-sleep algorithm \cite{hinton1995wake}. IWAE \cite{burda2015importance} uses importance sampling to tighten the ELBO used to train variational autoencoders \cite{kingma2013auto}.
Importance sampling has also been utilized to minimize bias in diffusion training \cite{kim2024training}. Similar to this work, Stable Target Fields \cite{xu2023stable} utilizes an SNIS estimator to estimate the marginal score of diffusion processes. However, their work uses a uniform proposal and is focused on improving diffusion models training instead of generally estimating the score function.

\textbf{Retrieval Augmented Methods}
Many recent methods have proposed using memory systems to improve generative models. In natural language processing, methods propose combining language model outputs \cite{khandelwal2019generalization, khandelwal2020nearest} or to augment transformer contexts  with results from a similarity search based over an external database \cite{guu2020retrieval, borgeaud2022improving}. In computer vision, RetrieveGAN \cite{tseng2020retrievegan} conditions generated images on patches queried using text from a dataset while RetrievalFuse \cite{siddiqui2021retrievalfuse} searches over a database of 3D structures to augment scene reconstruction. In diffusion models, KNN-Diffusion \cite{sheynin2022knn} and Retrieval-Augmented Diffusion Models \cite{blattmann2022retrieval} both condition diffusion models on the k nearest CLIP \cite{radford2021learning} embeddings of clean training images. Although our method also uses a KNN search for diffusion generative models, our approach is orthogonal to these methods as we do not condition our network on the retrieved images.
\section{Conclusions}
    Our work introduces a nearest neighbour estimator of the score function for diffusion generative models. We analytically bound the variance of our estimator, and show empirically that it has low bias and variance compared to other estimation techniques. Training consistency models, we find that our estimator both increases training speed and sample quality. We highlight how our estimator can be used for general purpose probability flow ODE traversal. We believe our work opens intriguing opportunities for future research. One possible research direction is substituting our estimator for pre-trained diffusion networks in distillation approaches, to produces new from-scratch training methods. Another area of work is exploring alternate metric spaces. The $\ell_2$ distance which forms the basis of our method does not capture high level similarity between images well. By transforming our data into a latent space as described by \citet{rombach2022high}, we may be able leverage more informative neighbours for score estimation. Finally, we believe more research is warrented to investigate the cause of observed differences between peak score network estimator error and the peak error of SNIS estimators.

\section*{Impact Statement}
    Diffusion models require large amounts of computation and energy to train. Arguably, generating from such models is even more energy intensive. Consistency models address this inference energy problem by limiting the number of integration steps required to generate data. Directly training consistency models, rather than learning from a pre-trained diffusion model further lessons the energy burden. Our work directly addresses all these energy problems by making it possible to train better consistency models more energy efficiently, without first training a diffusion model. Powerful generative AI models are dual-use technology. We generally  believe that democratizing access to the technology by lowering energy requirements for training and inference helps rather than harms.

\section*{Acknowledgements}
We acknowledge the support of the Natural Sciences and Engineering Research Council of Canada (NSERC), the Canada CIFAR AI Chairs Program, Inverted AI, MITACS, the Department of Energy through Lawrence Berkeley National Laboratory, and Google. This research was enabled in part by technical support and computational resources provided by the Digital Research Alliance of Canada Compute Canada (alliancecan.ca), the Advanced Research Computing at the University of British Columbia (arc.ubc.ca), Amazon, and Oracle.

\bibliography{refs}

\begin{thebibliography}{41}
\providecommand{\natexlab}[1]{#1}
\providecommand{\url}[1]{\texttt{#1}}
\expandafter\ifx\csname urlstyle\endcsname\relax
  \providecommand{\doi}[1]{doi: #1}\else
  \providecommand{\doi}{doi: \begingroup \urlstyle{rm}\Url}\fi

\bibitem[Blattmann et~al.(2022)Blattmann, Rombach, Oktay, M{\"u}ller, and Ommer]{blattmann2022retrieval}
Blattmann, A., Rombach, R., Oktay, K., M{\"u}ller, J., and Ommer, B.
\newblock Retrieval-augmented diffusion models.
\newblock \emph{Advances in Neural Information Processing Systems}, 35:\penalty0 15309--15324, 2022.

\bibitem[Borgeaud et~al.(2022)Borgeaud, Mensch, Hoffmann, Cai, Rutherford, Millican, van~den Driessche, Lespiau, Damoc, Clark, de~Las~Casas, Guy, Menick, Ring, Hennigan, Huang, Maggiore, Jones, Cassirer, Brock, Paganini, Irving, Vinyals, Osindero, Simonyan, Rae, Elsen, and Sifre]{borgeaud2022improving}
Borgeaud, S., Mensch, A., Hoffmann, J., Cai, T., Rutherford, E., Millican, K., van~den Driessche, G., Lespiau, J., Damoc, B., Clark, A., de~Las~Casas, D., Guy, A., Menick, J., Ring, R., Hennigan, T., Huang, S., Maggiore, L., Jones, C., Cassirer, A., Brock, A., Paganini, M., Irving, G., Vinyals, O., Osindero, S., Simonyan, K., Rae, J.~W., Elsen, E., and Sifre, L.
\newblock Improving language models by retrieving from trillions of tokens.
\newblock In Chaudhuri, K., Jegelka, S., Song, L., Szepesv{\'{a}}ri, C., Niu, G., and Sabato, S. (eds.), \emph{International Conference on Machine Learning, {ICML} 2022, 17-23 July 2022, Baltimore, Maryland, {USA}}, volume 162 of \emph{Proceedings of Machine Learning Research}, pp.\  2206--2240. {PMLR}, 2022.

\bibitem[Bornschein \& Bengio(2015)Bornschein and Bengio]{bornschein2014reweighted}
Bornschein, J. and Bengio, Y.
\newblock Reweighted wake-sleep.
\newblock In Bengio, Y. and LeCun, Y. (eds.), \emph{3rd International Conference on Learning Representations, {ICLR} 2015, San Diego, CA, USA, May 7-9, 2015, Conference Track Proceedings}, 2015.

\bibitem[Burda et~al.(2016)Burda, Grosse, and Salakhutdinov]{burda2015importance}
Burda, Y., Grosse, R.~B., and Salakhutdinov, R.
\newblock Importance weighted autoencoders.
\newblock In Bengio, Y. and LeCun, Y. (eds.), \emph{4th International Conference on Learning Representations, {ICLR} 2016, San Juan, Puerto Rico, May 2-4, 2016, Conference Track Proceedings}, 2016.

\bibitem[Charbonnier et~al.(1997)Charbonnier, Blanc{-}F{\'{e}}raud, Aubert, and Barlaud]{charbonnier1997deterministic}
Charbonnier, P., Blanc{-}F{\'{e}}raud, L., Aubert, G., and Barlaud, M.
\newblock Deterministic edge-preserving regularization in computed imaging.
\newblock \emph{{IEEE} Trans. Image Process.}, 6\penalty0 (2):\penalty0 298--311, 1997.

\bibitem[Deng et~al.(2009)Deng, Dong, Socher, Li, Li, and Fei-Fei]{deng2009imagenet}
Deng, J., Dong, W., Socher, R., Li, L.-J., Li, K., and Fei-Fei, L.
\newblock Imagenet: A large-scale hierarchical image database.
\newblock In \emph{2009 IEEE conference on computer vision and pattern recognition}, pp.\  248--255. Ieee, 2009.

\bibitem[Guu et~al.(2020)Guu, Lee, Tung, Pasupat, and Chang]{guu2020retrieval}
Guu, K., Lee, K., Tung, Z., Pasupat, P., and Chang, M.
\newblock Retrieval augmented language model pre-training.
\newblock In \emph{Proceedings of the 37th International Conference on Machine Learning, {ICML} 2020, 13-18 July 2020, Virtual Event}, volume 119 of \emph{Proceedings of Machine Learning Research}, pp.\  3929--3938. {PMLR}, 2020.

\bibitem[Harvey et~al.(2022)Harvey, Naderiparizi, Masrani, Weilbach, and Wood]{harvey2022flexible}
Harvey, W., Naderiparizi, S., Masrani, V., Weilbach, C., and Wood, F.
\newblock Flexible diffusion modeling of long videos.
\newblock \emph{Advances in Neural Information Processing Systems}, 35:\penalty0 27953--27965, 2022.

\bibitem[Hesterberg(1995)]{hesterberg1995weighted}
Hesterberg, T.
\newblock Weighted average importance sampling and defensive mixture distributions.
\newblock \emph{Technometrics}, 37\penalty0 (2):\penalty0 185--194, 1995.

\bibitem[Heusel et~al.(2017)Heusel, Ramsauer, Unterthiner, Nessler, and Hochreiter]{heusel2017gans}
Heusel, M., Ramsauer, H., Unterthiner, T., Nessler, B., and Hochreiter, S.
\newblock Gans trained by a two time-scale update rule converge to a local nash equilibrium.
\newblock In Guyon, I., von Luxburg, U., Bengio, S., Wallach, H.~M., Fergus, R., Vishwanathan, S. V.~N., and Garnett, R. (eds.), \emph{Advances in Neural Information Processing Systems 30: Annual Conference on Neural Information Processing Systems 2017, December 4-9, 2017, Long Beach, CA, {USA}}, pp.\  6626--6637, 2017.

\bibitem[Hinton et~al.(1995)Hinton, Dayan, Frey, and Neal]{hinton1995wake}
Hinton, G.~E., Dayan, P., Frey, B.~J., and Neal, R.~M.
\newblock The" wake-sleep" algorithm for unsupervised neural networks.
\newblock \emph{Science}, 268\penalty0 (5214):\penalty0 1158--1161, 1995.

\bibitem[Ho et~al.(2020)Ho, Jain, and Abbeel]{ho2020denoising}
Ho, J., Jain, A., and Abbeel, P.
\newblock Denoising diffusion probabilistic models.
\newblock \emph{Advances in neural information processing systems}, 33:\penalty0 6840--6851, 2020.

\bibitem[Johnson et~al.(2019)Johnson, Douze, and J{\'e}gou]{johnson2019billion}
Johnson, J., Douze, M., and J{\'e}gou, H.
\newblock Billion-scale similarity search with {GPUs}.
\newblock \emph{IEEE Transactions on Big Data}, 7\penalty0 (3):\penalty0 535--547, 2019.

\bibitem[Kahn \& Marshall(1953)Kahn and Marshall]{kahn1953methods}
Kahn, H. and Marshall, A.~W.
\newblock Methods of reducing sample size in monte carlo computations.
\newblock \emph{Journal of the Operations Research Society of America}, 1\penalty0 (5):\penalty0 263--278, 1953.

\bibitem[Karras et~al.(2022)Karras, Aittala, Aila, and Laine]{karras2022elucidating}
Karras, T., Aittala, M., Aila, T., and Laine, S.
\newblock Elucidating the design space of diffusion-based generative models.
\newblock In Koyejo, S., Mohamed, S., Agarwal, A., Belgrave, D., Cho, K., and Oh, A. (eds.), \emph{Advances in Neural Information Processing Systems 35: Annual Conference on Neural Information Processing Systems 2022, NeurIPS 2022, New Orleans, LA, USA, November 28 - December 9, 2022}, 2022.

\bibitem[Khandelwal et~al.(2020)Khandelwal, Levy, Jurafsky, Zettlemoyer, and Lewis]{khandelwal2019generalization}
Khandelwal, U., Levy, O., Jurafsky, D., Zettlemoyer, L., and Lewis, M.
\newblock Generalization through memorization: Nearest neighbor language models.
\newblock In \emph{8th International Conference on Learning Representations, {ICLR} 2020, Addis Ababa, Ethiopia, April 26-30, 2020}, 2020.

\bibitem[Khandelwal et~al.(2021)Khandelwal, Fan, Jurafsky, Zettlemoyer, and Lewis]{khandelwal2020nearest}
Khandelwal, U., Fan, A., Jurafsky, D., Zettlemoyer, L., and Lewis, M.
\newblock Nearest neighbor machine translation.
\newblock In \emph{9th International Conference on Learning Representations, {ICLR} 2021, Virtual Event, Austria, May 3-7, 2021}, 2021.

\bibitem[Kim et~al.(2023)Kim, Lai, Liao, Murata, Takida, Uesaka, He, Mitsufuji, and Ermon]{kim2023consistency}
Kim, D., Lai, C.-H., Liao, W.-H., Murata, N., Takida, Y., Uesaka, T., He, Y., Mitsufuji, Y., and Ermon, S.
\newblock Consistency trajectory models: Learning probability flow ode trajectory of diffusion.
\newblock \emph{arXiv preprint arXiv:2310.02279}, 2023.

\bibitem[Kim et~al.(2024)Kim, Na, Park, Jang, Kim, Kang, and Moon]{kim2024training}
Kim, Y., Na, B., Park, M., Jang, J., Kim, D., Kang, W., and Moon, I.-C.
\newblock Training unbiased diffusion models from biased dataset.
\newblock \emph{arXiv preprint arXiv:2403.01189}, 2024.

\bibitem[Kingma \& Welling(2014)Kingma and Welling]{kingma2013auto}
Kingma, D.~P. and Welling, M.
\newblock Auto-encoding variational bayes.
\newblock In Bengio, Y. and LeCun, Y. (eds.), \emph{2nd International Conference on Learning Representations, {ICLR} 2014, Banff, AB, Canada, April 14-16, 2014, Conference Track Proceedings}, 2014.

\bibitem[Krizhevsky et~al.(2009)Krizhevsky, Hinton, et~al.]{krizhevsky2009learning}
Krizhevsky, A., Hinton, G., et~al.
\newblock Learning multiple layers of features from tiny images.
\newblock 2009.

\bibitem[Liu et~al.(2015)Liu, Luo, Wang, and Tang]{liu2015faceattributes}
Liu, Z., Luo, P., Wang, X., and Tang, X.
\newblock Deep learning face attributes in the wild.
\newblock In \emph{Proceedings of International Conference on Computer Vision (ICCV)}, December 2015.

\bibitem[Lu et~al.(2023)Lu, Lu, Jiang, Szabados, Sun, and Yu]{lu2023cm}
Lu, H., Lu, Y., Jiang, D., Szabados, S.~R., Sun, S., and Yu, Y.
\newblock Cm-gan: Stabilizing gan training with consistency models.
\newblock In \emph{ICML 2023 Workshop on Structured Probabilistic Inference \& Generative Modeling}, 2023.

\bibitem[Obukhov et~al.(2020)Obukhov, Seitzer, Wu, Zhydenko, Kyl, and Lin]{obukhov2020torchfidelity}
Obukhov, A., Seitzer, M., Wu, P.-W., Zhydenko, S., Kyl, J., and Lin, E. Y.-J.
\newblock High-fidelity performance metrics for generative models in pytorch, 2020.
\newblock URL \url{https://github.com/toshas/torch-fidelity}.
\newblock Version: 0.3.0, DOI: 10.5281/zenodo.4957738.

\bibitem[Owen(2013)]{mcbook}
Owen, A.~B.
\newblock \emph{Monte Carlo theory, methods and examples}.
\newblock \url{https://artowen.su.domains/mc/}, 2013.

\bibitem[Poole et~al.(2023)Poole, Jain, Barron, and Mildenhall]{poole2022dreamfusion}
Poole, B., Jain, A., Barron, J.~T., and Mildenhall, B.
\newblock Dreamfusion: Text-to-3d using 2d diffusion.
\newblock In \emph{The Eleventh International Conference on Learning Representations, {ICLR} 2023, Kigali, Rwanda, May 1-5, 2023}, 2023.

\bibitem[Radford et~al.(2021)Radford, Kim, Hallacy, Ramesh, Goh, Agarwal, Sastry, Askell, Mishkin, Clark, Krueger, and Sutskever]{radford2021learning}
Radford, A., Kim, J.~W., Hallacy, C., Ramesh, A., Goh, G., Agarwal, S., Sastry, G., Askell, A., Mishkin, P., Clark, J., Krueger, G., and Sutskever, I.
\newblock Learning transferable visual models from natural language supervision.
\newblock In Meila, M. and Zhang, T. (eds.), \emph{Proceedings of the 38th International Conference on Machine Learning, {ICML} 2021, 18-24 July 2021, Virtual Event}, volume 139 of \emph{Proceedings of Machine Learning Research}, pp.\  8748--8763. {PMLR}, 2021.

\bibitem[Rombach et~al.(2022)Rombach, Blattmann, Lorenz, Esser, and Ommer]{rombach2022high}
Rombach, R., Blattmann, A., Lorenz, D., Esser, P., and Ommer, B.
\newblock High-resolution image synthesis with latent diffusion models.
\newblock In \emph{{IEEE/CVF} Conference on Computer Vision and Pattern Recognition, {CVPR} 2022, New Orleans, LA, USA, June 18-24, 2022}, pp.\  10674--10685. {IEEE}, 2022.

\bibitem[Salimans \& Ho(2022)Salimans and Ho]{salimans2022progressive}
Salimans, T. and Ho, J.
\newblock Progressive distillation for fast sampling of diffusion models.
\newblock In \emph{The Tenth International Conference on Learning Representations, {ICLR} 2022, Virtual Event, April 25-29, 2022}. OpenReview.net, 2022.

\bibitem[Salimans et~al.(2016)Salimans, Goodfellow, Zaremba, Cheung, Radford, and Chen]{salimans2016improved}
Salimans, T., Goodfellow, I.~J., Zaremba, W., Cheung, V., Radford, A., and Chen, X.
\newblock Improved techniques for training gans.
\newblock In Lee, D.~D., Sugiyama, M., von Luxburg, U., Guyon, I., and Garnett, R. (eds.), \emph{Advances in Neural Information Processing Systems 29: Annual Conference on Neural Information Processing Systems 2016, December 5-10, 2016, Barcelona, Spain}, pp.\  2226--2234, 2016.

\bibitem[Sauer et~al.(2023)Sauer, Lorenz, Blattmann, and Rombach]{sauer2023adversarial}
Sauer, A., Lorenz, D., Blattmann, A., and Rombach, R.
\newblock Adversarial diffusion distillation.
\newblock \emph{arXiv preprint arXiv:2311.17042}, 2023.

\bibitem[Sheynin et~al.(2023)Sheynin, Ashual, Polyak, Singer, Gafni, Nachmani, and Taigman]{sheynin2022knn}
Sheynin, S., Ashual, O., Polyak, A., Singer, U., Gafni, O., Nachmani, E., and Taigman, Y.
\newblock knn-diffusion: Image generation via large-scale retrieval.
\newblock In \emph{The Eleventh International Conference on Learning Representations, {ICLR} 2023, Kigali, Rwanda, May 1-5, 2023}, 2023.

\bibitem[Siddiqui et~al.(2021)Siddiqui, Thies, Ma, Shan, Nie{\ss}ner, and Dai]{siddiqui2021retrievalfuse}
Siddiqui, Y., Thies, J., Ma, F., Shan, Q., Nie{\ss}ner, M., and Dai, A.
\newblock Retrievalfuse: Neural 3d scene reconstruction with a database.
\newblock In \emph{2021 {IEEE/CVF} International Conference on Computer Vision, {ICCV} 2021, Montreal, QC, Canada, October 10-17, 2021}, pp.\  12548--12557. {IEEE}, 2021.

\bibitem[Sohl-Dickstein et~al.(2015)Sohl-Dickstein, Weiss, Maheswaranathan, and Ganguli]{sohl2015deep}
Sohl-Dickstein, J., Weiss, E., Maheswaranathan, N., and Ganguli, S.
\newblock Deep unsupervised learning using nonequilibrium thermodynamics.
\newblock In \emph{International conference on machine learning}, pp.\  2256--2265. PMLR, 2015.

\bibitem[Song \& Dhariwal(2024)Song and Dhariwal]{song2023improved}
Song, Y. and Dhariwal, P.
\newblock Improved techniques for training consistency models.
\newblock In \emph{The Twelfth International Conference on Learning Representations}, 2024.

\bibitem[Song et~al.(2021)Song, Sohl{-}Dickstein, Kingma, Kumar, Ermon, and Poole]{song2020score}
Song, Y., Sohl{-}Dickstein, J., Kingma, D.~P., Kumar, A., Ermon, S., and Poole, B.
\newblock Score-based generative modeling through stochastic differential equations.
\newblock In \emph{9th International Conference on Learning Representations, {ICLR} 2021, Virtual Event, Austria, May 3-7, 2021}. OpenReview.net, 2021.

\bibitem[Song et~al.(2023)Song, Dhariwal, Chen, and Sutskever]{song2023consistency}
Song, Y., Dhariwal, P., Chen, M., and Sutskever, I.
\newblock Consistency models.
\newblock In Krause, A., Brunskill, E., Cho, K., Engelhardt, B., Sabato, S., and Scarlett, J. (eds.), \emph{International Conference on Machine Learning, {ICML} 2023, 23-29 July 2023, Honolulu, Hawaii, {USA}}, volume 202 of \emph{Proceedings of Machine Learning Research}, pp.\  32211--32252. {PMLR}, 2023.

\bibitem[Tseng et~al.(2020)Tseng, Lee, Jiang, Yang, and Yang]{tseng2020retrievegan}
Tseng, H., Lee, H., Jiang, L., Yang, M., and Yang, W.
\newblock Retrievegan: Image synthesis via differentiable patch retrieval.
\newblock In Vedaldi, A., Bischof, H., Brox, T., and Frahm, J. (eds.), \emph{Computer Vision - {ECCV} 2020 - 16th European Conference, Glasgow, UK, August 23-28, 2020, Proceedings, Part {VIII}}, volume 12353 of \emph{Lecture Notes in Computer Science}, pp.\  242--257. Springer, 2020.

\bibitem[Xu et~al.(2023)Xu, Tong, and Jaakkola]{xu2023stable}
Xu, Y., Tong, S., and Jaakkola, T.~S.
\newblock Stable target field for reduced variance score estimation in diffusion models.
\newblock In \emph{The Eleventh International Conference on Learning Representations, {ICLR} 2023, Kigali, Rwanda, May 1-5, 2023}, 2023.

\bibitem[Yi et~al.(2023)Yi, Sun, and Li]{yi2023generalization}
Yi, M., Sun, J., and Li, Z.
\newblock On the generalization of diffusion model.
\newblock \emph{arXiv preprint arXiv:2305.14712}, 2023.

\bibitem[Zhang et~al.(2018)Zhang, Isola, Efros, Shechtman, and Wang]{zhang2018unreasonable}
Zhang, R., Isola, P., Efros, A.~A., Shechtman, E., and Wang, O.
\newblock The unreasonable effectiveness of deep features as a perceptual metric.
\newblock In \emph{Proceedings of the IEEE conference on computer vision and pattern recognition}, pp.\  586--595, 2018.

\end{thebibliography}
\bibliographystyle{icml2024}

\newpage
\appendix
\onecolumn
\section*{Appendices}
\section{Derivations}
{\subsection{Marginal Score as a Posterior Expectation of Conditional Scores} \label{ap:cond_expecation}}

Below, we derive \cref{eq:app_marginal_score_cond}.

\begin{align}
    \nabla_{\bz} \log p_t(\bz) &= \frac{\nabla_{\bz} p_t(\bz)}{p_t(\bz)} \\
    &= \frac{\nabla_\bz \int p_t(\bz | \bx) p(\bx) d\bx}{p_t(\bz)} \\
    &= \frac{\int \nabla_\bz p_t(\bz | \bx) p(\bx) d\bx}{p_t(\bz)}  \label{eq:leibniz} \\
    &= \frac{\int \frac{p_t(\bz | \bx)}{p_t(\bz | \bx)} \nabla_\bz p_t(\bz | \bx) p(\bx) d\bx}{p_t(\bz)}\\
    &= \frac{\int p_t(\bz | \bx) \nabla_\bz \log p_t(\bz | \bx) p(\bx) d\bx}{p_t(\bz)} \\
    &= \int \frac{p_t(\bz | \bx) p(\bx)}{p_t(\bz)} \nabla_\bz \log p_t(\bz | \bx) d\bx \\
    &= \int p_t(\bx | \bz) \nabla_\bz \log p_t(\bz | \bx) d\bx  \label{eq:bayes} \\ 
    &= \mathop{\mathbb{E}}_{\bx \sim p_t(\bx | \bz)}\left[\nabla_\bz \log p_t(\bz | \bx)\right].  \label{eq:app_marginal_score_cond}
\end{align}

\cref{eq:leibniz} is due to the Leibniz integral rule, while \cref{eq:bayes} is an application of Bayes Rule.

\subsection{Conditional Score}

 We define $p_t(\bz | \bx) = \mathcal{N}(\bz; \bx, t^2 \mathbf{I}).$ Then the conditional score of $p_t(\bz | \bx)$ is

\begin{align}
    \nabla_\bz \log p_t(\bz | \bx) &= \frac{\nabla_\bz p_t(\bz | \bx)}{p_t(\bz | \bx)} \\
    &= \frac{1}{p_t(\bz | \bx)} \cdot \nabla_{\bz}\frac{1}{\sqrt{(2\pi)^d \text{det}\left(t^2\mathbf{I}\right)}} \text{exp}\left(\frac{-1}{2t^2}\left(\bz - \bx\right)^\top\left(\bz - \bx\right)\right)\\
    &= \frac{p_t(\bz | \bx)}{p_t(\bz | \bx)}\nabla_{\bz}\frac{-1}{2t^2}\left(\bz - \bx\right)^\top\left(\bz - \bx\right) \\
    &= \frac{\bx - \bz}{t^2}. \label{eq:app_cond_score}
\end{align} 

\subsection{Marginal Score via Posterior Mean}

Substituting \cref{eq:app_cond_score} into \cref{eq:app_marginal_score_cond} we get

\begin{align*}
    \nabla_\bz \log p_t(\bz) &= \mathop{\mathbb{E}}_{\bx \sim p_t(\bx | \bz)}\left[ \frac{\bx - \bz}{t^2}\right] \\
    &= \frac{\mathbb{E}_{\bx \sim p_t(\bx | \bz)}\left[\bx \right] - \bz}{t^2} \\
    &= \frac{\mathbb{E}\left[\bx | \bz\right] - \bz}{t^2},
\end{align*}

where $\mathbb{E}[\bx | \bz] = \mathop{\mathbb{E}}_{\bx \sim p_t(\bx | \bz)} \left[\bx\right]$. 

{\subsection{Score Matching Optimal Denoiser} \label{ap:minimizer}}

We consider the diffusion objective from \cref{eq:DSM},
\begin{align}
     \mathcal{L}_t &= \mathbb{E}_{\bz, \bxi \sim p_t(\bz, \bxi)}\left[\lVert \mathbf{D}_\theta\left(\mathbf{z}, t\right) - \nabla_\bz \log p_t\left(\bz | \bx\right) \rVert_2^2 \right] \\
     &= \mathbb{E}_{\bz, \bx \sim p_t(\bz, \bx)}\left[ \mathbf{D}_\theta(\bz, t)^2 - 2\mathbf{D}_\theta(\bz, t) \bxi + (\bxi)^2 \right].\\
\end{align}
For a fixed $\bz$, we can write
\begin{align}
     \mathcal{L}_{t,\bz} &= \mathbb{E}_{\bxi \sim p_t(\bxi | \bz)}\left[ \mathbf{D}_\theta(\bz, t)^2 - 2\mathbf{D}_\theta(\bz, t)\bxi + (\bxi)^2 \right] \\
     &= \mathbf{D}_\theta(\bz, t)^2  - 2\mathbf{D}_\theta(\bz, t) \mathbb{E}_{\bxi \sim p_t(\bxi | \bz)}\left[\bxi\right]  + \mathbb{E}_{\bxi \sim p_t(\bxi | \bz)}\left[ (\bxi)^2 \right]. \label{eq:DSM_expanded}
\end{align}
\cref{eq:DSM_expanded} is quadratic with respect to $\mathbf{D}_\theta(\bz, t)$, so a unique minimizer $\mathbf{D}_\theta^\star(\bz, t)$ can be found by solving $\frac{\partial \mathcal{L}}{\partial \mathbf{D}_\theta(\bz, t)} = \mathbf{0}.$

\begin{align}
    \frac{\partial \mathcal{L}_{t,z}}{\partial \mathbf{D}_\theta(\bz, t)} = \mathbf{0} &= 2\mathbf{D}_\theta(\bz, t) -2 \mathbb{E}_{\bxi \sim p_t(\bxi | \bz)}\left[\bxi \right] \\
    \mathbf{D}_\theta^\star(\bz, t) &= \mathbb{E}_{\bxi \sim p_t(\bxi | \bz)}\left[\bxi\right]\\
    &= \pmean.
\end{align}

\subsection{Consistency Training as Consistency Matching with Euler Solver and One Sample Score Estimator}

The original Consistency Training objective from \cite{song2023consistency} is formulated as

\begin{equation}
    \mathcal{L}_{CT}^N = \mathbb{E}\left[\lambda(t_{n}) d\left(\student\left(\mathbf{x} + t_{n}\cdot\bm{\epsilon}, t\right), \teacher\left(\mathbf{x} + t_{n-1}\cdot \bm{\epsilon}, t_{n-1}\right)\right) \right], \label{eq:CT_original}
\end{equation} 

where $\bm{\epsilon} \sim \mathcal{N}(\mathbf{0}, \mathbf{I})$. Via the reparameterization trick, we define $\bz \sim p_{t_n}(\bz | \bx)$ as

\begin{equation}
    \bz = \bx + t_n \bm{\epsilon} \label{eq:reparam}.
\end{equation}

Substituting \cref{eq:reparam} into the Euler update function $\texttt{solver}(\bz, t_n, t_{n-1}, \nabla_\bz p_{t_n}(\bz, t_n)) = \bz + (t_{n-1} - t_n)\frac{d\bz}{d t}$ for \cref{eq:pfode} with the single sample score estimate 
$\nabla_\bz \log p_t(\bz) \approx \frac{\bx - \bz}{t^2}$ yields

\begin{align}
    \bz' &= \bz - (t_{n-1} - t_n) \frac{\bx - \bz}{t_n} \\
    &= \bx + t_n \bm{\epsilon} - (t_{n-1} - t_n) \frac{\bx - \bx + t_n \epsilon}{t_n} \\
    &= \bx + t_n \bm{\epsilon} - t_n \bm{\epsilon} + t_{n-1} \bm{\epsilon} \\
    &= \bx t_{n-1} \bm{\epsilon}. \label{eq:z_prime}
\end{align}

Substituting \cref{eq:reparam} and \cref{eq:z_prime} into \cref{eq:CT_original} yields

\begin{equation}
    \mathcal{L}_{CT}^N = \mathbb{E}\left[\lambda(t_{n}) d\left(\student\left(\bz, t\right), \teacher\left(\bz', t_{n-1}\right)\right) \right]
\end{equation}

which matches the form in \cref{eq:CM}.
{\section{Extension to Generalized Diffusion Processes} \label{ap:general_diffusion}}
For simplicity, this paper utilizes the EDM diffusion process given by \cref{eq:pfode}. However, other diffusion processes are widely used which vary the rate at which noise is added to the data and the scaling of the data through the diffusion process. \citet{karras2022elucidating} introduce a general purpose PF ODE which captures these choices
\begin{equation}
    d\bx = \left[\frac{\Dot{s}(t)}{s(t)}\bx - s(t)^2\Dot{\sigma}(t)\sigma(t) \nabla\bz \log p_t\left(\frac{\bz}{s(t)}\right)\right]dt. \label{eq:general_pfode}
\end{equation}
In our work, we select $\sigma(t) = t$, $s(t) = 1$, however other choices are possible. For example, one popular choice is the variance preserving diffusion process \cite{ho2020denoising, song2020score}, corresponding to 
\begin{align}
    \sigma(t) &= \sqrt{e^{\frac{1}{2}\beta_d t^2 + \beta_{min}t} - 1}\\
    s(t) &= 1 / \sqrt{e^{\frac{1}{2}\beta_d t^2 + \beta_{min}t}}.
\end{align}
The derivations of $s(t)$ and $\sigma(t)$ is covered in depth in \cite{karras2022elucidating}.
We now derive a general KNN score estimator for a generic diffusion processes defined by $s(t)$ and $\sigma(t)$. First, we define the forward likelihood for the generalized diffusion process.
\begin{equation}
    p_t(\bz | \bxi) = \mathcal{N}\left(\bz; s(t)\bxi, s(t)^2\sigma(t)^2\mathbf{I}\right). \label{eq:general_forward}
\end{equation}
Since the score function which is used for evaluation of \cref{eq:general_pfode} is a function of $\bz / s(t)$, we express \cref{eq:general_forward} in that manner
{ \allowdisplaybreaks
\begin{align}
    p_t(\bz | \bxi) &= \mathcal{N}\left(\bz; s(t)\bxi, s(t)^2\sigma(t)^2\mathbf{I}\right) \\
    &= (2\pi)^{\frac{-d}{2}} s(t)^{-d}\sigma(t)^{-d}\text{exp}\left(\frac{-\left \lVert \bz - s(t)\bxi \right \rVert_2^2}{2s(t)^2\sigma(t)^2}\right) \\
    &= (2\pi)^{\frac{-d}{2}} s(t)^{-d}\sigma(t)^{-d}\text{exp}\left(\frac{-s(t)^2\left \lVert \frac{\bz}{s(t)} - \bxi \right \rVert_2^2}{2s(t)^2\sigma(t)^2}\right) \\
    &= s(t)^{-d} \cdot (2\pi)^{\frac{-d}{2}} \sigma(t)^{-d}\text{exp}\left(\frac{-\left \lVert \frac{\bz}{s(t)} - \bxi \right \rVert_2^2}{2\sigma(t)^2}\right) \\
    &= s(t)^{-d} \mathcal{N}\left(\frac{\bz}{s(t)}; \bxi, \sigma(t)^2\bm{I}\right).
\end{align}
}
We define 
\begin{equation}
    p_t\left(\frac{\bz}{s(t)} \Big| \bxi\right) = \mathcal{N}\left(\frac{\bz}{s(t)}; \bxi, \sigma(t)^2\bm{I}\right). \label{eq:scaled_likelihood}
\end{equation}
The score of \cref{eq:scaled_likelihood} is
\begin{align}
    \nabla_{\bz} \log p_t\left(\frac{\bz}{s(t)} \Big| \bxi\right) &= p_t\left(\frac{\bz}{s(t)} \Big| \bxi\right)^{-1}\nabla_{\bz}p_t\left(\frac{\bz}{s(t)} \Big| \bxi\right) \\
    &= p_t\left(\frac{\bz}{s(t)} \Big| \bxi\right)^{-1} \cdot p_t\left(\frac{\bz}{s(t)} \Big| \bxi\right) \cdot \nabla_{\bz} \frac{-\left\lVert \frac{\bz}{s(t)} - \bxi \right \rVert_2^2}{2\sigma(t)^2} \\
    &= \frac{\bxi - \frac{\bz}{s(t)}}{s(t)\sigma(t)^2}.
\end{align}
Using \cref{eq:score_identity} derived in \cref{ap:cond_expecation}, we write
\begin{align}
    \nabla_{\bz} \log p_t\left(\frac{\bz}{\sigma(t)}\right) &= \mathop{\mathbb{E}}_{\bxi \sim p_t(\bxi | \bz / s(t))} \left[\nabla_{\bz} \log p_t\left(\frac{\bz}{s(t)} \Big| \bxi\right) \right] \\
    &= \mathop{\mathbb{E}}_{\bxi \sim p_t(\bxi | \bz / s(t))} \left[\frac{\bxi - \frac{\bz}{s(t)}}{s(t)\sigma(t)^2} \right] \\
    &= \frac{\mathop{\mathbb{E}}\left[\bx \Big| \frac{\bz}{s(t)}, t\right] - \frac{\bz}{s(t)}}{s(t)\sigma(t)^2}.
\end{align}
Therefore, estimating the score function in \cref{eq:general_pfode} is equivalent to estimating the posterior mean $\mathop{\mathbb{E}}\left[\bx \Big| \frac{\bz}{s(t)}, t\right]$. The posterior distribution $p_t\left(\bxi \Big| \frac{\bz}{s(t)}\right)$ can be expressed using Bayes Rule as
\begin{align}
    p_t\left(\bxi \Big| \frac{\bz}{s(t)}\right) &= \frac{p_t\left( \frac{\bz}{s(t)}\Big| \bxi \right)p\left(\bxi\right)}{p_t\left(\frac{\bz}{s(t)}\right)} \\
    &= \frac{p_t\left( \frac{\bz}{s(t)}\Big| \bxi \right)}{N p_t\left(\frac{\bz}{s(t)}\right)}.
\end{align}
We define $\mathcal{K}'$ to be the $k$ most probable elements of $p_t\left(\bxi \Big| \frac{\bz}{s(t)}\right)$ let $\bx_{k}$ be the element of $\mathcal{K}'$ with the smallest posterior probability. Then, using $\mathcal{K}'$, we define a general proposal
\begin{equation}
    \mathcal{Z}_q \cdot q_t\left(\bxi \Big | \frac{\bz}{s(t)}\right) = \begin{cases}
        & p_t\left( \frac{\bz}{s(t)}\Big| \bxi \right) \ \forall \ \bxi \in \mathcal{K}'\\
        & p_t\left( \frac{\bz}{s(t)}\Big| \bx_k \right) \ \forall \ \bxi \notin \mathcal{K}'.\\
    \end{cases} \label{eq:general_proposal}
\end{equation}
Because $p_t\left(\bxi  \Big| \frac{\bz}{s(t)} \right) \propto p_t\left( \frac{\bz}{s(t)}\Big| \bxi \right)$ and $p_t\left( \frac{\bz}{s(t)}\Big| \bxi \right)$ is Gaussian, the most probable elements of $p_t\left(\bxi  \Big|\frac{\bz}{s(t)}  \right) $ are the elements with the smallest distance $\left \lVert \frac{\bz}{s(t)} - \bxi \right \rVert_2^2$. We can therefore use a fast KNN search with query $\frac{\bz}{s(t)}$ over $\mathcal{D}$ to identify $\mathcal{K}'$. Together, an algorithm for estimating the score of a generic diffusion process is outlined in \cref{alg:general_estimator}

\begin{algorithm}[h!]
   \caption{General Nearest Neighbour Score Estimator}
   \label{alg:general_estimator}
\begin{algorithmic}
    \STATE {\bfseries Input:} $\bz$, dataset $\mathcal{D}$, index $\mathcal{I}$, neighbour size $k$, sample size $n$, scale $s(t)$, noise level $\sigma(t)$
    \STATE $\mathcal{K}', d \leftarrow \texttt{search}(I, \bz / s(t), k)$
    \STATE $p_t(\frac{\bz}{s(t)} | \bx_i) \leftarrow \text{exp}(\frac{-d_i^2}{2\sigma(t)^2})  \ \forall \ \bx_i \in \mathcal{K}'$ 
    \STATE $\mathcal{Z}_q \leftarrow \sum_{i=1}^k p_t(\frac{\bz}{s(t)}  | \bx_i) + (N-k)p_t(\frac{\bz}{s(t)}  | \bx_k)$
    \STATE $q_t(\bx^{(i)} | \frac{\bz}{s(t)} ) \leftarrow \frac{1}{\mathcal{Z}_q } \begin{cases}
        &p_t (\frac{\bz}{s(t)}  | \bx_i)\ \forall \ \bxi \in \mathcal{K}' \\
        & p_t (\frac{\bz}{s(t)} | \bx_k)\ \forall \ \bxi \notin \mathcal{K}'
    \end{cases} $
    \STATE Sample $\left\{\bx_1, \ldots, \bx_n \right\} \sim q_t(\bxi | \frac{\bz}{s(t)})$
    \STATE $w_i = p(\frac{\bz}{s(t)} | \bx_i) / q_t(\bx_i | \frac{\bz}{s(t)})$
    \STATE $\bar{w}_i = \frac{w_i}{\sum_{j=1}^n w_j}$
    \STATE $\hat{\bx} = \sum_{i=1}^n \bar{w}_i \bx_i$ \hfill\COMMENT{\Cref{eq:SNIS_estimator}}
    \STATE \textbf{return} $\frac{\hat{\bx} - \frac{\bz}{s(t)}}{s(t)\sigma(t)^2}$  
\end{algorithmic}
\end{algorithm}

\section{Proofs}
{\subsection{Proof of \cref{thm:knn_bound}} \label{ap:knn_upper}}

\Cref{thm:knn_bound} states\\

Let $t \in (0, \infty)$, then for a fixed $n$
\begin{align}
    \text{Tr}\left(\text{Cov}\left(\hat{\mu}_{\text{KNN}}\right)\right)\leq
    \frac{\mathcal{Z}_q}{p_t(\bz)}\text{Tr}\left(\text{Cov}\left(\hat{\mu}_{\text{MC}}\right)\right).
\end{align}
\begin{proof}
Starting with the diagonal of the covariance of the KNN estimator, we have
{\allowdisplaybreaks
\begin{align}
    &n \cdot \text{Diag}\left(\text{Cov}\left(\snispmean\right)\right) = \sum_{i=1}^{N} \frac{p_t\left(\bxi | \bz\right)^2}{q_t\left(\bxi | \bz\right)}\left(\bxi - \pmean\right)^{\circ2} \\
    &= \sum_{\bxi \in \mathcal{K}}\frac{p_t\left(\bxi | \bz\right)^2}{q_t\left(\bxi | \bz\right)}\left(\bxi - \pmean\right)^{\circ2} + \sum_{\bxi \notin \mathcal{K}} \frac{p_t\left(\bxi | \bz\right)^2}{q_t\left(\bxi | \bz\right)}\left(\bxi - \pmean\right)^{\circ2} \\
    \begin{split}
        &= \frac{\mathcal{Z}_q}{p_t\left(\bz\right)}\sum_{\bxi \in \mathcal{K}}\frac{p_t\left(\bz | \bxi\right)p\left(\bxi\right)}{p_t\left(\bz | \bxi\right)p\left(\bxi\right)} p_t\left(\bxi | \bz\right)\left(\bxi - \pmean\right)^{\circ2} \\ & \quad \quad \quad \quad + \frac{\mathcal{Z}_q}{p_t\left(\bz\right)}\sum_{\bxi \notin \mathcal{K}} \frac{p_t\left(\bz | \bxi\right)p\left(\bxi\right)}{p_t\left(\bz | \bx^{(k)}\right)p\left(\bxi\right)} p_t\left(\bxi | \bz\right)\left(\bxi - \pmean\right)^{\circ2}
    \end{split} \\
    \begin{split}
        &= \frac{\mathcal{Z}_q}{p_t\left(\bz\right)} \Bigg(\sum_{\bxi \in \mathcal{K}} p_t\left(\bxi | \bz\right)\left(\bxi - \pmean\right)^{\circ2}\\
        & \quad \quad \quad \quad + \sum_{\bxi \notin \mathcal{K}} \frac{p_t\left(\bz | \bxi\right)}{p_t\left(\bz | \bx^{(k)}\right)}p_t\left(\bxi | \bz\right)\left(\bxi - \pmean\right)^{\circ2} \\ 
        & \quad \quad \quad \quad  \pm \sum_{\bxi \notin \mathcal{K}}p_t\left(\bxi | \bz\right)\left(\bxi - \pmean\right)^{\circ2} \Bigg)
    \end{split} \\
    \begin{split}
        &= \frac{\mathcal{Z}_q}{p_t\left(\bz\right)} \Bigg(\sum_{i=1}^N p_t\left(\bxi | \bz\right)\left(\bxi - \pmean\right)^{\circ2}  \\
        & \quad \quad \quad \quad  + \sum_{\bxi \notin \mathcal{K}} \frac{p_t\left(\bz | \bxi\right)}{p_t\left(\bz | \bx^{(k)}\right)}p_t\left(\bxi | \bz\right)\left(\bxi - \pmean\right)^{\circ2} \\ 
        & \quad \quad \quad \quad  - \sum_{\bxi \notin \mathcal{K}}p_t\left(\bxi | \bz\right)\left(\bxi - \pmean\right)^{\circ2} \Bigg)
    \end{split} \\
    \begin{split}
        &= \frac{\mathcal{Z}_q}{p_t\left(\bz\right)} \Bigg(\sum_{i=1}^N p_t\left(\bxi | \bz\right)\left(\bxi - \pmean\right)^{\circ2}  \\
        & \quad \quad \quad \quad  + \sum_{\bxi \notin \mathcal{K}} \frac{p_t\left(\bz | \bxi\right)}{p_t\left(\bz | \bx^{(k)}\right)}p_t\left(\bxi | \bz\right)\left(\bxi - \pmean\right)^{\circ2} \\ 
        & \quad \quad \quad \quad  - \sum_{\bxi \notin \mathcal{K}}\frac{p_t\left(\bz | \bx^{(k)}\right)}{p_t\left(\bz | \bx^{(k)}\right)}p_t\left(\bxi | \bz\right)\left(\bxi - \pmean\right)^{\circ2} \Bigg)
    \end{split} \\
    \begin{split}
        &= \frac{\mathcal{Z}_q}{p_t\left(\bz\right)} \Bigg(\sum_{i=1}^N p_t\left(\bxi | \bz\right)\left(\bxi - \pmean\right)^{\circ2}  \\
        & \quad \quad \quad \quad  + \sum_{\bxi \notin \mathcal{K}} \frac{p_t\left(\bz | \bxi\right) - p_t\left(\bz | \bx^{(k)}\right) }{p_t\left(\bz | \bx^{(k)}\right)}p_t\left(\bxi | \bz\right)\left(\bxi - \pmean\right)^{\circ2}
    \end{split}\\
    \begin{split}
        &= \frac{\mathcal{Z}_q}{p_t\left(\bz\right)} \Bigg(\sum_{i=1}^N p_t\left(\bxi | \bz\right)\left(\bxi - \pmean\right)^{\circ2}  \\
        & \quad \quad \quad \quad  + \sum_{\bxi \notin \mathcal{K}} \frac{p_t\left(\bz | \bxi\right) - p_t\left(\bz | \bx^{(k)}\right) }{p_t\left(\bz | \bx^{(k)}\right)}p_t\left(\bxi | \bz\right)\left(\bxi - \pmean\right)^{\circ2}.
    \end{split}
\end{align}}
For $\bxi \notin \mathcal{K}, \ p_t\left(\bz | \bxi\right) \leq p_t\left(\bz | \bxk\right)$ so therefore $\left(p_t\left(\bz | \bxi\right) - p_t\left(\bz | \bxk \right)\right) \leq 0$. Since all other factors in the second term are positive, The term as a whole is negative. We upper bound this term with 0 and continue with an elementwise inequality on the elements of the vector
{\allowdisplaybreaks
\begin{align}
    n \cdot \text{Diag} \left(\text{Cov}\left(\snispmean\right)\right) &\leq \frac{\mathcal{Z}_q}{p_t\left(\bz\right)} \left(\sum_{i=1}^N p_t\left(\bxi | \bz\right)\left(\bxi - \pmean\right)^{\circ2} \right) \\
    &\leq \frac{\mathcal{Z}_q}{p_t\left(\bz\right)} \cdot n \cdot \text{Diag} \left(\text{Cov}\left(\hat{\mu}_{\text{MC}}\right)\right)\\
    \text{Diag}\left(\text{Cov}\left(\hat{\mu}_{\text{KNN}}\right) \right)&\leq \frac{\mathcal{Z}_q}{p_t\left(\bz\right)} \text{Diag} \left(\text{Cov}\left(\hat{\mu}_{\text{MC}}\right) \right).
\end{align}}
Let $\text{Diag}\left(\text{Cov}\left(\cdot\right) \right)^j$ denote the $j^{th}$ component of the diagonal vector. Since the inequaltiy is expressed element wise, $\forall j \in [0,\ldots, d]$ we have
\begin{equation}
\text{Diag}\left(\text{Cov}\left(\hat{\mu}_{\text{KNN}}\right) \right)^j\leq \frac{\mathcal{Z}_q}{p_t\left(\bz\right)} \text{Diag} \left(\text{Cov}\left(\hat{\mu}_{\text{MC}}\right) \right)^j.
\end{equation}
Summing over j, we write
\begin{align}
\sum_{j=1}^d\text{Diag}\left(\text{Cov}\left(\hat{\mu}_{\text{KNN}}\right) \right)^j&\leq \sum_{j=1}^d\frac{\mathcal{Z}_q}{p_t\left(\bz\right)} \text{Diag} \left(\text{Cov}\left(\hat{\mu}_{\text{MC}}\right) \right)^j\\
\sum_{j=1}^d\text{Diag}\left(\text{Cov}\left(\hat{\mu}_{\text{KNN}}\right) \right)^j&\leq \frac{\mathcal{Z}_q}{p_t\left(\bz\right)} \sum_{j=1}^d\text{Diag} \left(\text{Cov}\left(\hat{\mu}_{\text{MC}}\right) \right)^j\\
\text{Tr}\left(\text{Cov}\left(\hat{\mu}_{\text{KNN}}\right) \right)&\leq \frac{\mathcal{Z}_q}{p_t\left(\bz\right)} \text{Tr} \left(\text{Cov}\left(\hat{\mu}_{\text{MC}}\right) \right),
\end{align}
concluding the proof.
\end{proof}

\subsection{Uniform SNIS Trace of Covariance}
For utility in future proofs, we derive an expression for the trace of covariance for a self-normalized importance sampling estimator with a uniform proposal
\begin{lemma}
Let $\hat{\mu}_{\text{U}}$ be a SNIS posterior mean estimator with $q(\bxi) = \frac{1}{N} \ \forall \bxi \in D$ then,
\begin{equation}
      n \cdot \text{Tr}\left(\text{Cov}\left(\hat{\mu}_{\text{U}}\right) \right) = N \cdot \sum_{i=1}^N p_t(\bxi | \bz)^2(\bxi - \pmean)^{\circ 2}.  \label{eq:uniform_trace}
\end{equation}
\end{lemma}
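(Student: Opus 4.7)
The plan is to specialize the general SNIS covariance expression that was already derived in the paper, namely Equation~\eqref{eq:SNIS_Var}, to the case of a uniform proposal. Equation~\eqref{eq:SNIS_Var} gives the diagonal of the covariance of the \emph{score} estimator $(\hat{\mu}_{\text{SNIS}} - \bz)/t^2$. Since $\bz$ and $t$ are fixed when we condition on them, the translation by $-\bz$ does not affect the covariance and the scalar $1/t^2$ pulls out as $1/t^4$. Multiplying both sides by $t^4$ therefore yields
\begin{equation*}
    \text{Diag}\bigl(\text{Cov}(\hat{\mu}_{\text{SNIS}})\bigr) = \frac{1}{n}\sum_{i=1}^N \frac{p_t(\bxi \mid \bz)^2}{q_t(\bxi \mid \bz)}\,(\bxi - \mu)^{\circ 2}.
\end{equation*}

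Next, I would substitute $q_t(\bxi \mid \bz) = 1/N$ for every $\bxi \in \mathcal{D}$. The ratio $p_t(\bxi \mid \bz)^2 / q_t(\bxi \mid \bz)$ collapses to $N \cdot p_t(\bxi \mid \bz)^2$, giving
\begin{equation*}
    n \cdot \text{Diag}\bigl(\text{Cov}(\hat{\mu}_{\text{U}})\bigr) = N \sum_{i=1}^N p_t(\bxi \mid \bz)^2\,(\bxi - \mu)^{\circ 2}.
\end{equation*}
Finally, the trace of a covariance matrix equals the sum of the entries of its diagonal, and summing the Hadamard squares over the $d$ coordinates of $\bxi - \mu$ produces $\lVert \bxi - \mu \rVert_2^2$. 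Taking this coordinate-wise sum on both sides recovers the claimed identity in trace form (the statement as written uses the vectorized Hadamard notation, understood componentwise).

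The only subtlety is correctly bookkeeping the scaling and the shift when moving between the covariance of the score estimator (as given in Equation~\eqref{eq:SNIS_Var}) and the covariance of the posterior-mean estimator $\hat{\mu}_{\text{U}}$ stated in the lemma; this is handled by the elementary identity $\text{Cov}(aX + b) = a^2 \text{Cov}(X)$ for constant $a$ and $b$. There is no genuine obstacle --- the lemma is a one-line corollary of Equation~\eqref{eq:SNIS_Var} once the substitution $q_t(\bxi \mid \bz) = 1/N$ is made and the coordinates are summed.
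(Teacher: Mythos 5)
Your proposal is correct and follows essentially the same route as the paper's proof: both simply specialize the general SNIS diagonal-of-covariance formula to the uniform proposal $q = 1/N$ and simplify. The only cosmetic difference is that you start from Equation~\eqref{eq:SNIS_Var} (the score estimator version) and explicitly peel off the $1/t^4$ and shift via $\text{Cov}(aX+b) = a^2\text{Cov}(X)$, whereas the paper begins directly from the unscaled posterior-mean estimator covariance; this extra step is harmless and correct. You also rightly flag the mild abuse of notation in the lemma statement, where the left side is written as a trace but the right side retains the Hadamard-square vector form — the paper's own proof likewise concludes with the diagonal (vector) identity and leaves the coordinate sum implicit.
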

\begin{proof}
   \begin{align}
    n \cdot \text{Diag}\left(\text{Cov}(\hat{\mu}_{\text{SNIS}})\right) &= \sum_{i=1}^N \frac{p_t(\bxi | \bz)^2}{q(\bxi)}(\bxi - \pmean)^{\circ 2} \\
    n \cdot \text{Diag}\left(\text{Cov}(\hat{\mu}_{\text{SNIS}})\right)  &= \sum_{i=1}^N \frac{p_t(\bxi | \bz)^2}{1/N}(\bxi - \pmean)^{\circ 2} \\
    n \cdot \text{Diag}\left(\text{Cov}(\hat{\mu}_{\text{SNIS}})\right)  &= N \cdot \sum_{i=1}^N p_t(\bxi | \bz)^2(\bxi - \pmean)^{\circ 2}.
\end{align} 
\end{proof}


\subsection{Proof of \Cref{thm:knn_vs_stf}} \label{ap:knn_v_stf}
Let $t \in (0, \infty)$,  then for a fixed $n$
\begin{equation}
\begin{split}
    &\text{Tr}\left(\text{Cov}\left(\hat{\mu}_{\text{KNN}}\right) \right) \leq \\
    &\left(1 - \frac{\sum_{\bxi \notin \mathcal{K}}  p_t(\bz | \bxi)}{p_t(\bz)}\right) \text{Tr}\left(\text{Cov}\left(\hat{\mu}_{\text{MC}}\right)\right) + \frac{(N-k)}{N} \text{Tr}\left(\text{Cov}\left(\hat{\mu}_{\text{U}}\right)\right).
\end{split}
\end{equation}
\begin{proof}
We start from the expression for the Diagonal of the covariance of an SNIS posterior mean estimator
{\allowdisplaybreaks
\begin{align}
    n \cdot \text{Diag}\left(\text{Cov}\left(\hat{\mu}_{\text{SNIS}}\right)\right) &= \sum_{i=1}^N \frac{p_t(\bxi | \bz)^2}{q_t(\bxi | \bz)}\left(\bxi - \pmean\right)^{\circ2}\\
    &= \sum_{\bxi \in \mathcal{K}} \frac{p_t(\bxi | \bz)^2}{q_t(\bxi | \bz)}\left(\bxi - \pmean\right)^{\circ2} + \sum_{\bxi \notin \mathcal{K}} \frac{p_t(\bxi | \bz)^2}{q_t(\bxi | \bz)}\left(\bxi - \pmean\right)^{\circ2}. \\
\end{align}}
For the KNN based proposal described in \cref{eq:proposal}, we can write
{\allowdisplaybreaks
\begin{align}
    \begin{split}
        n \cdot \text{Diag}\left(\text{Cov}\left(\hat{\mu}_{\text{KNN}}\right)\right) &= \sum_{\bxi \in \mathcal{K}} \frac{p_t(\bxi | \bz)^2}{q_t(\bxi | \bz)}\left(\bxi - \pmean\right)^{\circ2} \\
        &\quad \quad \quad + \sum_{\bxi \notin \mathcal{K}} \frac{\mathcal{Z}_q}{p_t(\bz | \bxk)}p_t(\bxi | \bz)^2\left(\bxi - \pmean\right)^{\circ2}
    \end{split}\\
    \begin{split}
    &=\sum_{\bxi \in \mathcal{K}} \frac{p_t(\bxi | \bz)^2}{q_t(\bxi | \bz)}\left(\bxi - \pmean\right)^{\circ2} \\
    & \quad \quad \quad + \frac{\mathcal{Z}_q}{p_t(\bz | \bxk)} \sum_{\bxi \notin \mathcal{K}} p_t(\bxi | \bz)^2\left(\bxi - \pmean\right)^{\circ2}
    \end{split}\\
    \begin{split}
        &= \sum_{\bxi \in \mathcal{K}} \frac{p_t(\bxi | \bz)^2}{q_t(\bxi | \bz)}\left(\bxi - \pmean\right)^{\circ2} \\
        & \quad \quad \quad + \frac{\mathcal{Z}_q}{p_t(\bz | \bxk)} \sum_{\bxi \notin \mathcal{K}} p_t(\bxi | \bz)^2\left(\bxi - \pmean\right)^{\circ2} \\
        & \quad \quad \quad \pm \frac{\mathcal{Z}_q}{p_t(\bz | \bxk)} \sum_{\bxi \in \mathcal{K}} p_t(\bxi | \bz)^2\left(\bxi - \pmean\right)^{\circ2}
    \end{split}\\
    \begin{split}
        &= \sum_{\bxi \in \mathcal{K}} \frac{p_t(\bxi | \bz)^2}{q_t(\bxi | \bz)}\left(\bxi - \pmean\right)^{\circ2} \\
        & \quad \quad \quad - \frac{\mathcal{Z}_q}{p_t(\bz | \bxk)} \sum_{\bxi \in \mathcal{K}} p_t(\bxi | \bz)^2\left(\bxi - \pmean\right)^{\circ2} \\
        & \quad \quad \quad + \frac{\mathcal{Z}_q}{p_t(\bz | \bxk)} \sum_{i=1}^N p_t(\bxi | \bz)^2\left(\bxi - \pmean\right)^{\circ2}
    \end{split}\\
    \begin{split}
        &= \sum_{\bxi \in \mathcal{K}} \frac{p_t(\bxi | \bz)^2}{q_t(\bxi | \bz)}\left(\bxi - \pmean\right)^{\circ2} \\
        & \quad \quad \quad - \frac{\sum_{\bxi \in \mathcal{K}}p_t(\bz | \bxi) + (N-k)p_t(\bz | \bxk)}{p_t(\bz | \bxk)} \sum_{\bxi \in \mathcal{K}} p_t(\bxi | \bz)^2\left(\bxi - \pmean\right)^{\circ2} \\
        & \quad \quad \quad + \frac{\sum_{\bxi \in \mathcal{K}}p_t(\bz | \bxi) + (N-k)p_t(\bz | \bxk)}{Np_t(\bz | \bxk)}\cdot \sum_{i=1}^N N \cdot p_t(\bxi | \bz)^2\left(\bxi - \pmean\right)^{\circ2}
    \end{split}\\
    \begin{split}
        &= \sum_{\bxi \in \mathcal{K}} \frac{p_t(\bxi | \bz)^2}{q_t(\bxi | \bz)}\left(\bxi - \pmean\right)^{\circ2} \\
        & \quad \quad \quad - \frac{\sum_{\bxi \in \mathcal{K}}p_t(\bz | \bxi)}{p_t(\bz | \bxk)} \sum_{\bxi \in \mathcal{K}} p_t(\bxi | \bz)^2\left(\bxi - \pmean\right)^{\circ2} \\
        & \quad \quad \quad - \frac{(N-k)p_t(\bz | \bxk)}{p_t(\bz | \bxk)} \sum_{\bxi \in \mathcal{K}} p_t(\bxi | \bz)^2\left(\bxi - \pmean\right)^{\circ2} \\
        & \quad \quad \quad + \frac{\sum_{\bxi \in \mathcal{K}}p_t(\bz | \bxi)}{Np_t(\bz | \bxk)}\cdot \sum_{i=1}^N N \cdot p_t(\bxi | \bz)^2\left(\bxi - \pmean\right)^{\circ2}\\
        & \quad \quad \quad + \frac{(N-k)p_t(\bz | \bxk)}{Np_t(\bz | \bxk)}\cdot \sum_{i=1}^N N \cdot p_t(\bxi | \bz)^2\left(\bxi - \pmean\right)^{\circ2}. \label{eq:stf_sub_ref}
    \end{split}
\end{align}}
We substitute \cref{eq:uniform_trace} into last term of \cref{eq:stf_sub_ref}. Continuing, we have
{\allowdisplaybreaks
\begin{align} 
    \begin{split}
        n \cdot \text{Diag}\left(\text{Cov}\left(\hat{\mu}_{\text{KNN}}\right)\right) &= \sum_{\bxi \in \mathcal{K}} \frac{p_t(\bxi | \bz)^2}{q_t(\bxi | \bz)}\left(\bxi - \pmean\right)^{\circ2} \\
        & \quad \quad \quad - \frac{\sum_{\bxi \in \mathcal{K}}p_t(\bz | \bxi)}{p_t(\bz | \bxk)} \sum_{\bxi \in \mathcal{K}} p_t(\bxi | \bz)^2\left(\bxi - \pmean\right)^{\circ2} \\
        & \quad \quad \quad - \frac{(N-k)p_t(\bz | \bxk)}{p_t(\bz | \bxk)} \sum_{\bxi \in \mathcal{K}} p_t(\bxi | \bz)^2\left(\bxi - \pmean\right)^{\circ2} \\
        & \quad \quad \quad + \frac{\sum_{\bxi \in \mathcal{K}}p_t(\bz | \bxi)}{Np_t(\bz | \bxk)}\cdot \sum_{i=1}^N N \cdot p_t(\bxi | \bz)^2\left(\bxi - \pmean\right)^{\circ2}\\
        & \quad \quad \quad + \frac{(N-k)}{N}\cdot n \cdot \text{Diag}\left(\text{Cov}\left(\hat{\mu}_{\text{U}}\right) \right)
    \end{split}\\
    \begin{split}
        &= \sum_{\bxi \in \mathcal{K}} \frac{p_t(\bxi | \bz)^2}{q_t(\bxi | \bz)}\left(\bxi - \pmean\right)^{\circ2} \\
        & \quad \quad \quad - \frac{(N-k)p_t(\bz | \bxk)}{p_t(\bz | \bxk)} \sum_{\bxi \in \mathcal{K}} p_t(\bxi | \bz)^2\left(\bxi - \pmean\right)^{\circ2} \\
        & \quad \quad \quad + \frac{\sum_{\bxi \in \mathcal{K}}p_t(\bz | \bxi)}{p_t(\bz | \bxk)}\cdot \sum_{\bxi \notin \mathcal{K}}p_t(\bxi | \bz)^2\left(\bxi - \pmean\right)^{\circ2}\\
        & \quad \quad \quad + \frac{(N-k)}{N}\cdot n \cdot \text{Diag}\left(\text{Cov}\left(\hat{\mu}_{\text{U}}\right) \right)
    \end{split}\\ 
    \begin{split}
        &= \sum_{\bxi \in \mathcal{K}} \frac{p_t(\bxi | \bz)^2}{q_t(\bxi | \bz)}\left(\bxi - \pmean\right)^{\circ2} \\
        & \quad \quad \quad - \frac{(N-k)p_t(\bz | \bxk)}{p_t(\bz)} \sum_{\bxi \in \mathcal{K}} \frac{p_t(\bz | \bxi)}{p_t(\bz | \bxk)}p_t(\bxi | \bz)\left(\bxi - \pmean\right)^{\circ2} \\
        & \quad \quad \quad + \frac{\sum_{\bxi \in \mathcal{K}}p_t(\bz | \bxi)}{p_t(\bz)}\cdot \sum_{\bxi \notin \mathcal{K}}\frac{p_t(\bz | \bxi)}{p_t(\bz | \bxk)}p_t(\bxi | \bz)\left(\bxi - \pmean\right)^{\circ2}\\
        & \quad \quad \quad + \frac{(N-k)}{N}\cdot n \cdot \text{Diag}\left(\text{Cov}\left(\hat{\mu}_{\text{U}}\right) \right).
    \end{split}\label{eq:substitute_reference}
\end{align}}
We note that for $\bxi \notin \mathcal{K}$, $\frac{p_t(\bz | \bxi)}{p_t(\bz | \bxk)} \leq 1$. Similarly for $\bxi \in \mathcal{K}$, $\frac{p_t(\bz | \bxi)}{p_t(\bz | \bxk)} \geq 1$ Swapping these fractions with $1$ in the second and third term upper bounds each dimension of the trace of covariance. We substitute $\frac{p_t(\bz | \bxi)}{p_t(\bz | \bxk)} = 1$ in the second and third terms of \cref{eq:substitute_reference}, and swap the equality with a element-wise inequality
{\allowdisplaybreaks
\begin{align} 
    \begin{split}
        n \cdot \text{Diag}\left(\text{Cov}\left(\hat{\mu}_{\text{KNN}}\right)\right) &\leq \sum_{\bxi \in \mathcal{K}} \frac{p_t(\bxi | \bz)^2}{q_t(\bxi | \bz)}\left(\bxi - \pmean\right)^{\circ2} \\
        & \quad \quad \quad - \frac{(N-k)p_t(\bz | \bxk)}{p_t(\bz)} \sum_{\bxi \in \mathcal{K}} p_t(\bxi | \bz)\left(\bxi - \pmean\right)^{\circ2} \\
        & \quad \quad \quad + \frac{\sum_{\bxi \in \mathcal{K}}p_t(\bz | \bxi)}{p_t(\bz)}\cdot \sum_{\bxi \notin \mathcal{K}}p_t(\bxi | \bz)\left(\bxi - \pmean\right)^{\circ2}\\
        & \quad \quad \quad + \frac{(N-k)}{N}\cdot n \cdot \text{Diag}\left(\text{Cov}\left(\hat{\mu}_{\text{U}}\right) \right)
    \end{split}\\
    \begin{split}
        &\leq \sum_{\bxi \in \mathcal{K}} \frac{p_t(\bxi | \bz)^2}{q_t(\bxi | \bz)}\left(\bxi - \pmean\right)^{\circ2} \\
        & \quad \quad \quad - \frac{(N-k)p_t(\bz | \bxk)}{p_t(\bz)} \sum_{\bxi \in \mathcal{K}} p_t(\bxi | \bz)\left(\bxi - \pmean\right)^{\circ2} \\
        & \quad \quad \quad + \frac{\mathcal{Z}_q - (N-k)p_t(\bz | \bxk)}{p_t(\bz)}\cdot \sum_{\bxi \notin \mathcal{K}}p_t(\bxi | \bz)\left(\bxi - \pmean\right)^{\circ2}\\
        & \quad \quad \quad + \frac{(N-k)}{N}\cdot n \cdot \text{Diag}\left(\text{Cov}\left(\hat{\mu}_{\text{U}}\right) \right)
    \end{split}\\
    \begin{split}
        &\leq \frac{\mathcal{Z}_q}{p_t(\bz)}\sum_{\bxi \in \mathcal{K}} \frac{p_t(\bz | \bxi)}{p_t(\bz | \bxi)} p_t(\bxi | \bz)\left(\bxi - \pmean\right)^{\circ2} \\
        & \quad \quad \quad - \frac{(N-k)p_t(\bz | \bxk)}{p_t(\bz)} \sum_{\bxi \in \mathcal{K}} p_t(\bxi | \bz)\left(\bxi - \pmean\right)^{\circ2} \\
        & \quad \quad \quad + \frac{\mathcal{Z}_q - (N-k)p_t(\bz | \bxk)}{p_t(\bz)}\cdot \sum_{\bxi \notin \mathcal{K}}p_t(\bxi | \bz)\left(\bxi - \pmean\right)^{\circ2}\\
        & \quad \quad \quad + \frac{(N-k)}{N}\cdot n \cdot \text{Diag}\left(\text{Cov}\left(\hat{\mu}_{\text{U}}\right) \right)
    \end{split}\\ 
        \begin{split}
        &\leq \frac{\mathcal{Z}_q}{p_t(\bz)}\sum_{i=1}^N p_t(\bxi | \bz)\left(\bxi - \pmean\right)^{\circ2} \\
        & \quad \quad \quad - \frac{(N-k)p_t(\bz | \bxk)}{p_t(\bz)} \sum_{i=1}^N p_t(\bxi | \bz)\left(\bxi - \pmean\right)^{\circ2} \\
        & \quad \quad \quad + \frac{(N-k)}{N}\cdot n \cdot \text{Diag}\left(\text{Cov}\left(\hat{\mu}_{\text{U}}\right) \right).
    \end{split} \label{eq:mc_trace_sub} 
\end{align}}

We substitute $n \cdot \text{Diag}\left(\text{Cov}\left(\hat{\mu}_{\text{MC}}\right)\right) = \sum_{i=1}^N p_t(\bxi | \bz)\left(\bxi - \pmean\right)^{\circ2}$ into the second and third terms of \cref{eq:mc_trace_sub}

{\allowdisplaybreaks
\begin{align}
    \begin{split}
        n \cdot \text{Diag}\left(\text{Cov}\left(\hat{\mu}_{\text{KNN}}\right)\right) &\leq \frac{\mathcal{Z}_q}{p_t(\bz)}\cdot n \cdot \text{Diag}\left(\text{Cov}\left(\hat{\mu}_{\text{MC}}\right)\right) \\
        & \quad \quad \quad - \frac{(N-k)p_t(\bz | \bxk)}{p_t(\bz)} \cdot n \cdot \text{Diag}\left(\text{Cov}\left(\hat{\mu}_{\text{MC}}\right)\right)\\
        & \quad \quad \quad + \frac{(N-k)}{N}\cdot n \cdot \text{Diag}\left(\text{Cov}\left(\hat{\mu}_{\text{U}}\right) \right)
    \end{split}\\
    \begin{split}
          &\leq \frac{\mathcal{Z}_q - (N-k)p_t(\bz | \bxk)}{p_t(\bz)} \cdot n \cdot \text{Diag}\left(\text{Cov}\left(\hat{\mu}_{\text{MC}}\right)\right)\\
        & \quad \quad \quad + \frac{(N-k)}{N}\cdot n \cdot \text{Diag}\left(\text{Cov}\left(\hat{\mu}_{\text{U}}\right) \right)
    \end{split}\\ 
       \begin{split}
        &\leq \frac{\sum_{\bxi \in \mathcal{K}} p_t(\bz | \bxi)}{p_t(\bz)} \cdot n \cdot \text{Diag}\left(\text{Cov}\left(\hat{\mu}_{\text{MC}}\right)\right)\\
        & \quad \quad \quad + \frac{(N-k)}{N}\cdot n \cdot \text{Diag}\left(\text{Cov}\left(\hat{\mu}_{\text{U}}\right) \right)
    \end{split}\\ 
   \begin{split}
        &\leq \left(1 - \frac{\sum_{\bxi \notin \mathcal{K}} p_t(\bz | \bxi)}{p_t(\bz)}\right) \cdot n \cdot \text{Diag}\left(\text{Cov}\left(\hat{\mu}_{\text{MC}}\right)\right)\\
        & \quad \quad \quad + \frac{(N-k)}{N}\cdot n \cdot \text{Diag}\left(\text{Cov}\left(\hat{\mu}_{\text{U}}\right). \right)
    \end{split}
\end{align}
}
Dividing both sides by $n$ yields 
\begin{equation}
    \begin{split}
        &\text{Diag}\left(\text{Cov}\left(\hat{\mu}_{\text{KNN}}\right)\right) \leq \\
        &\left(1 - \frac{\sum_{\bxi \notin \mathcal{K}} p_t(\bz | \bxi)}{p_t(\bz)}\right)\text{Diag}\left(\text{Cov}\left(\hat{\mu}_{\text{MC}}\right)\right) + \frac{(N-k)}{N}\text{Diag}\left(\text{Cov}\left(\hat{\mu}_{\text{U}}\right)\right).
    \end{split} \label{eq:diag_stf_vs_knn}
\end{equation}

Let $\text{Diag}\left(\text{Cov}\left(\cdot\right) \right)^j$ denote the $j^{th}$ component of the diagonal vector. Since the inequaltiy in \cref{eq:diag_stf_vs_knn} is expressed element wise, $\forall j \in [0,\ldots, d]$ we have

{\allowdisplaybreaks
\begin{align}
    \begin{split}
        &\text{Diag}\left(\text{Cov}\left(\hat{\mu}_{\text{KNN}}\right)\right)^j \leq \\
        &\left(1 - \frac{\sum_{\bxi \notin \mathcal{K}} p_t(\bz | \bxi)}{p_t(\bz)}\right)\text{Diag}\left(\text{Cov}\left(\hat{\mu}_{\text{MC}}\right)\right)^j + \frac{(N-k)}{N}\text{Diag}\left(\text{Cov}\left(\hat{\mu}_{\text{U}}\right)\right)^j.
    \end{split} \label{eq:jth_component_knn_v_stf}
\end{align}
}
Summing \cref{eq:jth_component_knn_v_stf} from $j=1$ to $j=d$, we get
{\allowdisplaybreaks
\begin{align}
    \begin{split}
        \sum_{j=1}^d\text{Diag} \left(\text{Cov}\left(\hat{\mu}_{\text{KNN}}\right)\right)^j \leq \sum_{j=1}^d\Bigg(\left(1 - \frac{\sum_{\bxi \notin \mathcal{K}} p_t(\bz | \bxi)}{p_t(\bz)}\right)\text{Diag}\left(\text{Cov}\left(\hat{\mu}_{\text{MC}}\right)\right)^j + \frac{(N-k)}{N}\text{Diag}\left(\text{Cov}\left(\hat{\mu}_{\text{U}}\right)\right)^j\Bigg)
    \end{split}\\
    \begin{split}
        \text{Tr}\left(\text{Cov}\left(\hat{\mu}_{\text{KNN}}\right)\right) \leq \left(1 - \frac{\sum_{\bxi \notin \mathcal{K}} p_t(\bz | \bxi)}{p_t(\bz)}\right)\sum_{j=1}^d \text{Diag}\left(\text{Cov}\left(\hat{\mu}_{\text{MC}}\right)\right)^j + \frac{(N-k)}{N}\sum_{j=1}^d\text{Diag}\left(\text{Cov}\left(\hat{\mu}_{\text{U}}\right)\right)^j
    \end{split}\\
    \begin{split}
        \text{Tr}\left(\text{Cov}\left(\hat{\mu}_{\text{KNN}}\right)\right) \leq \left(1 - \frac{\sum_{\bxi \notin \mathcal{K}} p_t(\bz | \bxi)}{p_t(\bz)}\right)\text{Tr}\left(\text{Cov}\left(\hat{\mu}_{\text{MC}}\right)\right) + \frac{(N-k)}{N}\text{Tr}\left(\text{Cov}\left(\hat{\mu}_{\text{U}}\right)\right),
    \end{split}
\end{align}
}
concluding the proof.
\end{proof}
\section{Additional Results}
{\subsection{Score Estimator Hyperparameter Ablation} \label{ap:estimator_ablation}}

We include additional results from the investigation of estimator errors from \cref{sec:empirical_perf}. \Cref{fig:additional_quantitative_n64,fig:additional_quantitative_n256} show the performance of the posterior Monte Carlo, STF, and KNN estimators for varying $n$ and $k$. In both cases, we see that the KNN estimator improves with $k$ with low bias for all estimators, and lower MSE than STF. Increasing the sample size $n$ reduces MSE substantially for our method, but not STF due to the prevalent bias of their estimator. We see that for small $k$ and $n$, the KNN estimator exhibits similar variance to STF. We hypothesize that becuase STF deterministically includes only one image from the posterior, the weights of this image dominates that of the other images drawn from the data distribution, in turn leading to lower sample variance.

\begin{figure}
    \centering
    \includegraphics[width=0.8\textwidth]{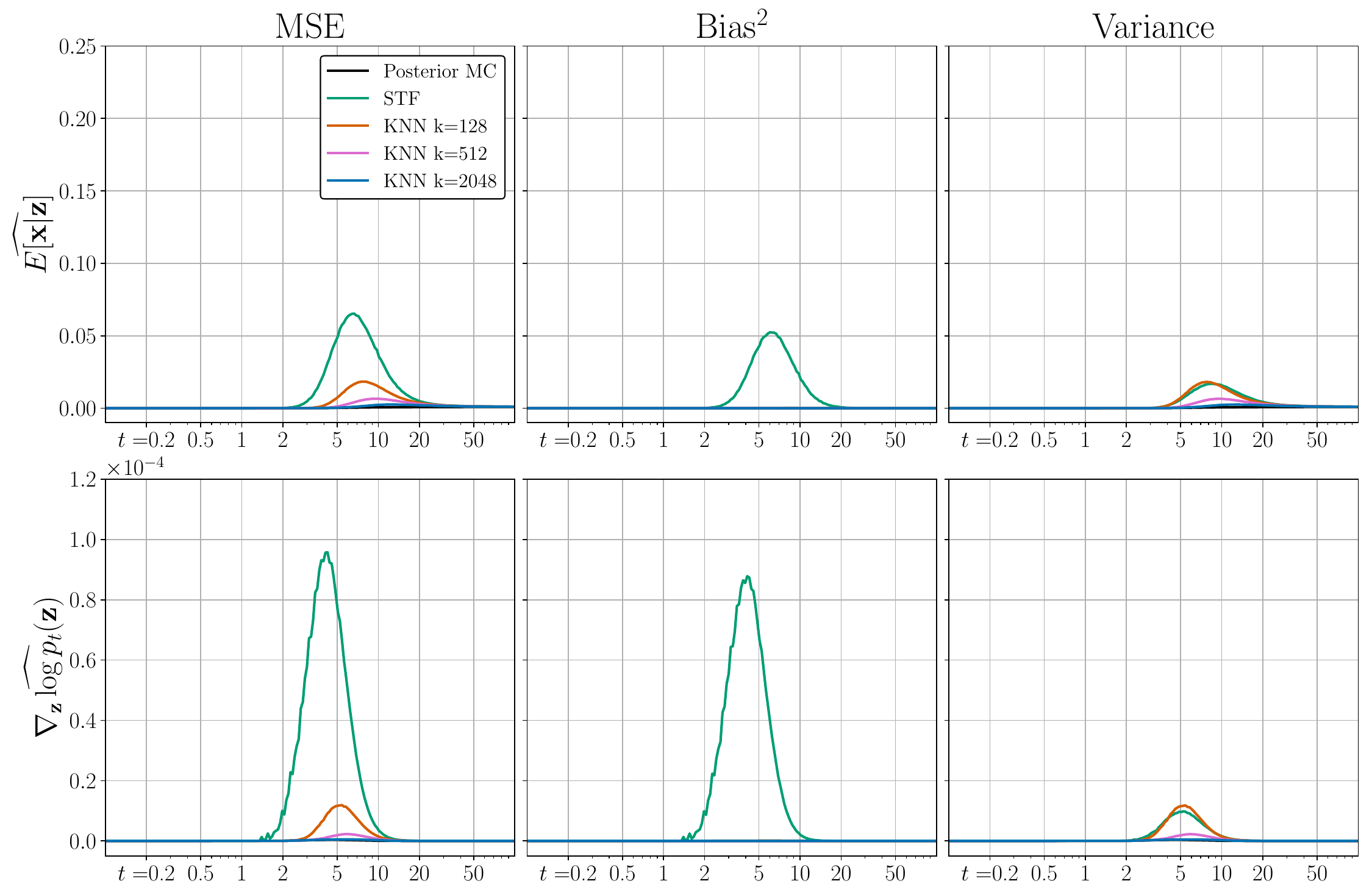}
    \caption{Estimator performance for $n=256$. KNN estimator performance generally improves with $k$.} 
    \label{fig:additional_quantitative_n256}
\end{figure}

\begin{figure}
    \centering
    \includegraphics[width=0.8\textwidth]{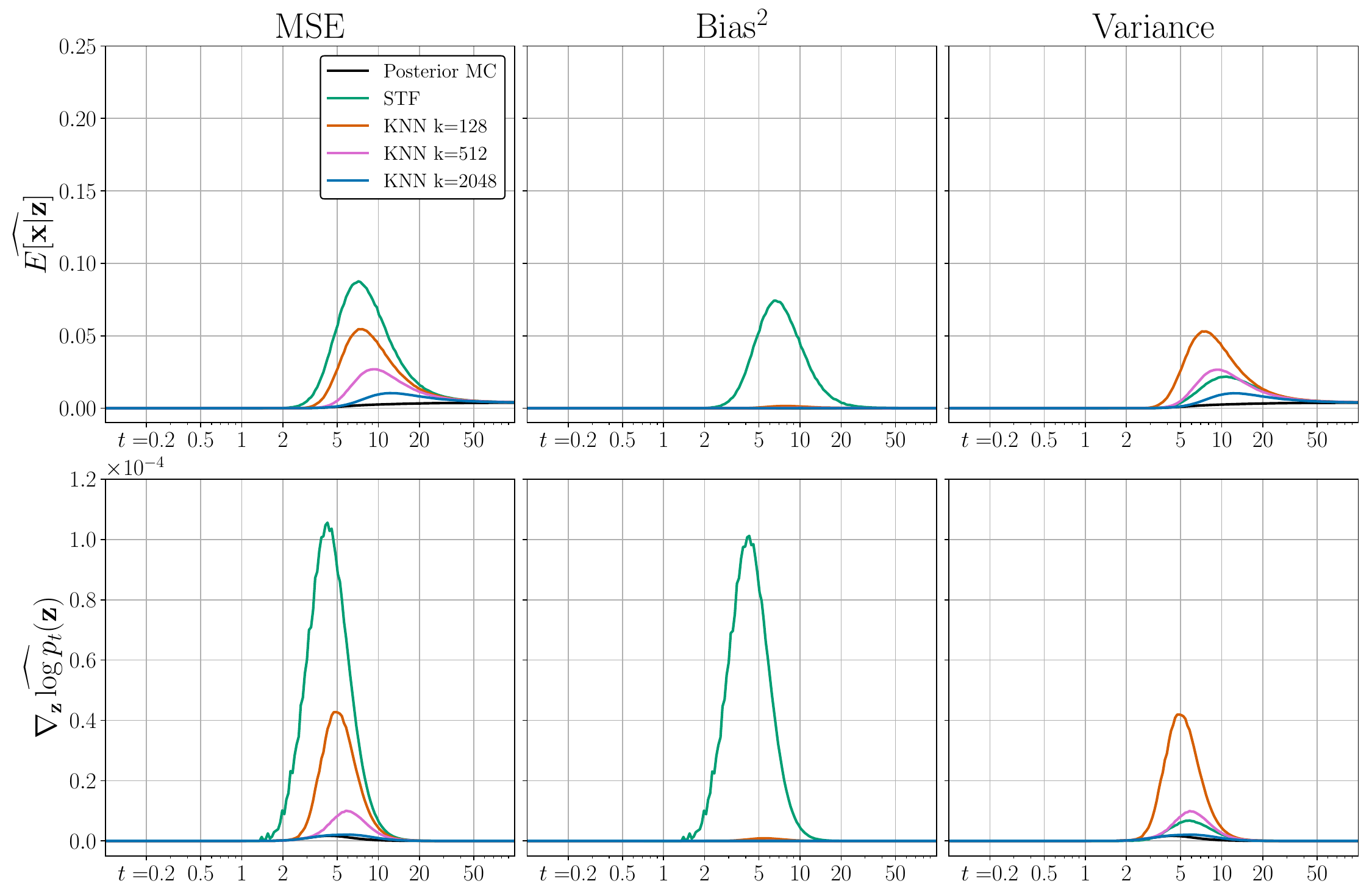}
    \caption{Estimator performance for $n=64$. KNN estimator performance generally improves with $k$.}
    \label{fig:additional_quantitative_n64}
\end{figure}

\subsection{Additional Denoiser Images}

\Cref{fig:additional_qualitative} shows additional KNN posterior mean estimates for a variety of source images and noise levels.

\begin{figure*}[ht!]
    \centering
    \begin{minipage}[b]{.5\textwidth}
        \centering
        \includegraphics[width=\textwidth]{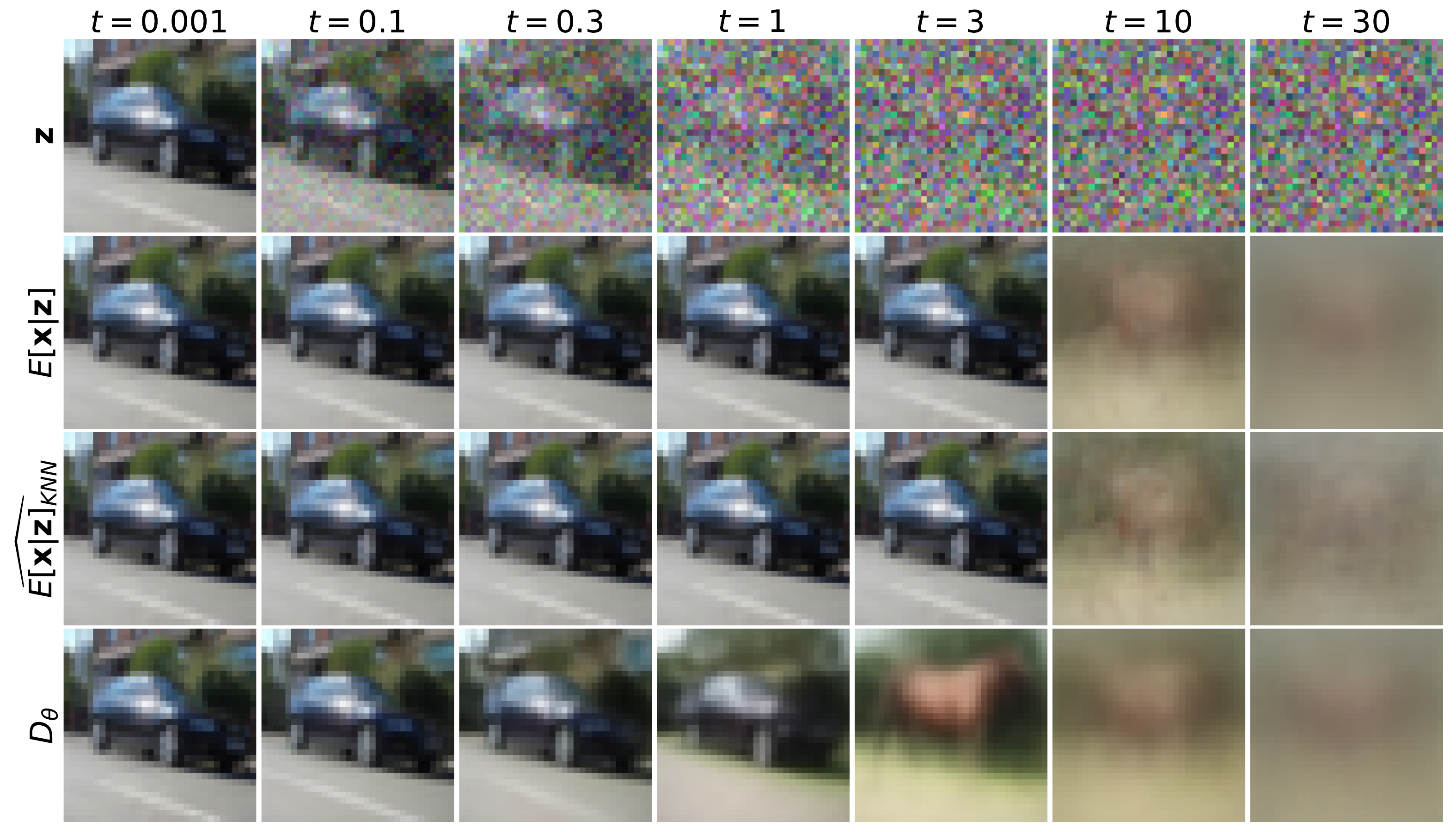}
    \end{minipage}\hfill
    \begin{minipage}[b]{.5\textwidth}
        \centering
        \includegraphics[width=\textwidth]{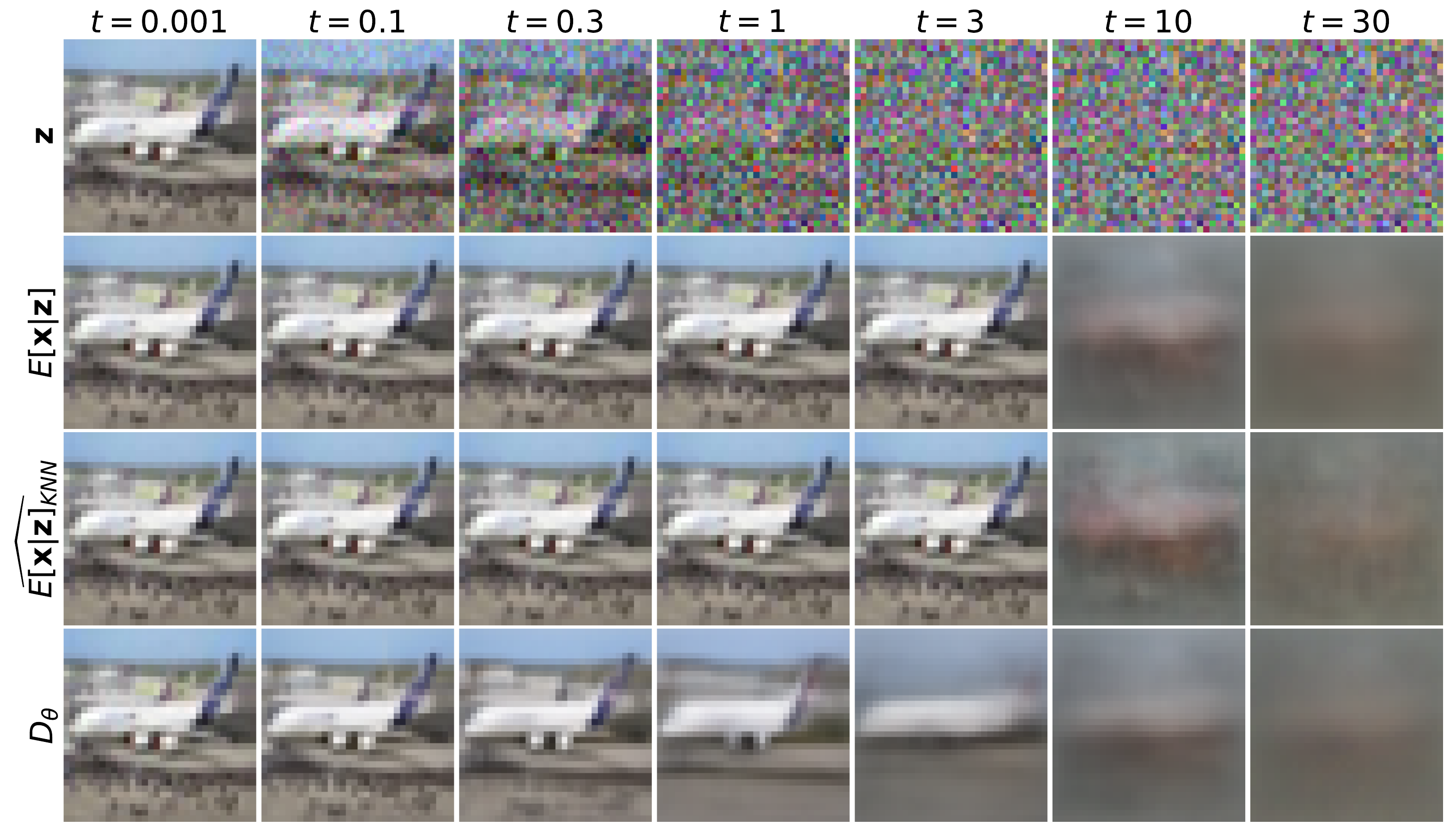}
    \end{minipage}
    \begin{minipage}[b]{.5\textwidth}
        \centering
        \includegraphics[width=\textwidth]{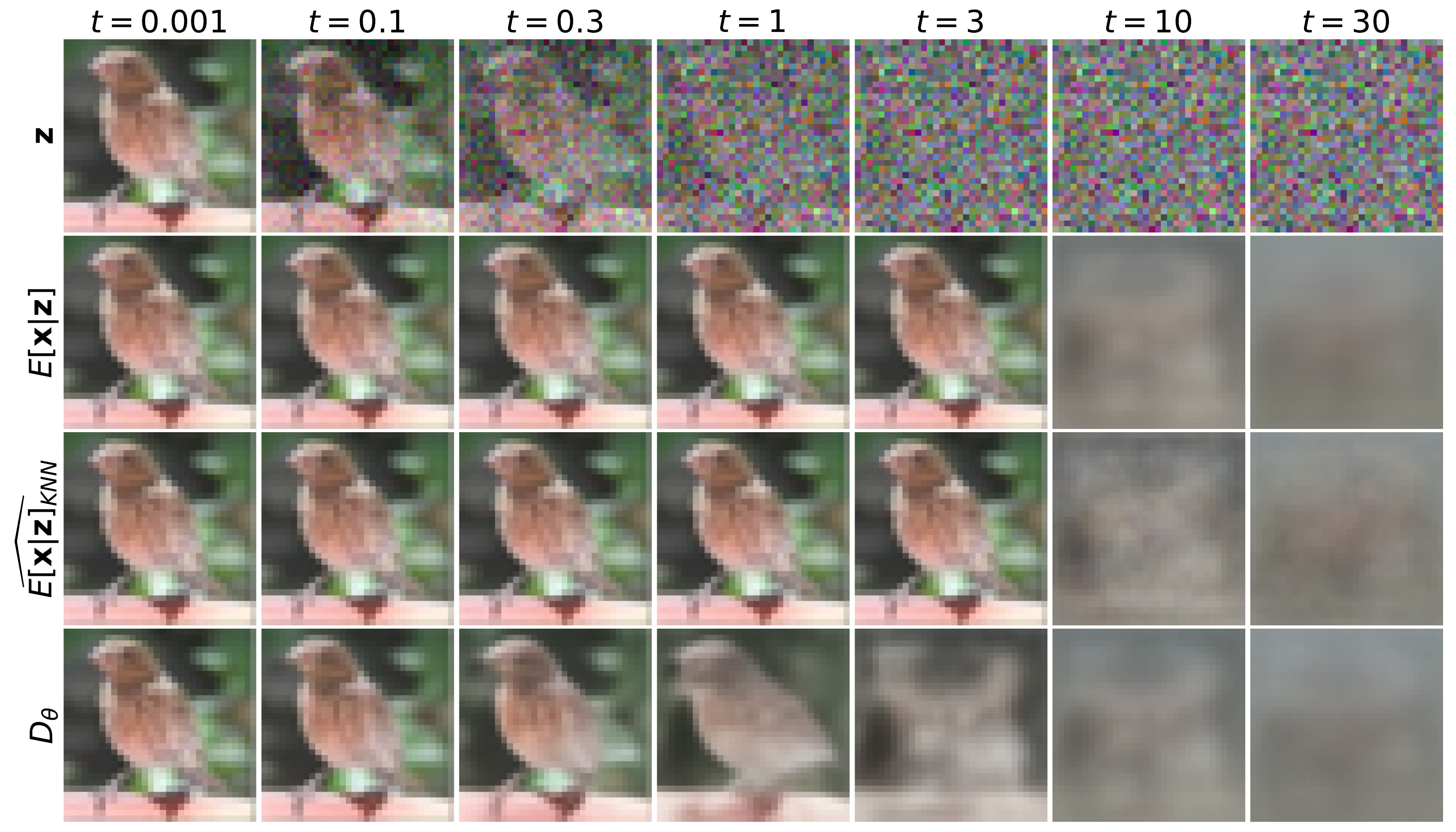}
    \end{minipage}\hfill
    \begin{minipage}[b]{.5\textwidth}
        \centering
        \includegraphics[width=\textwidth]{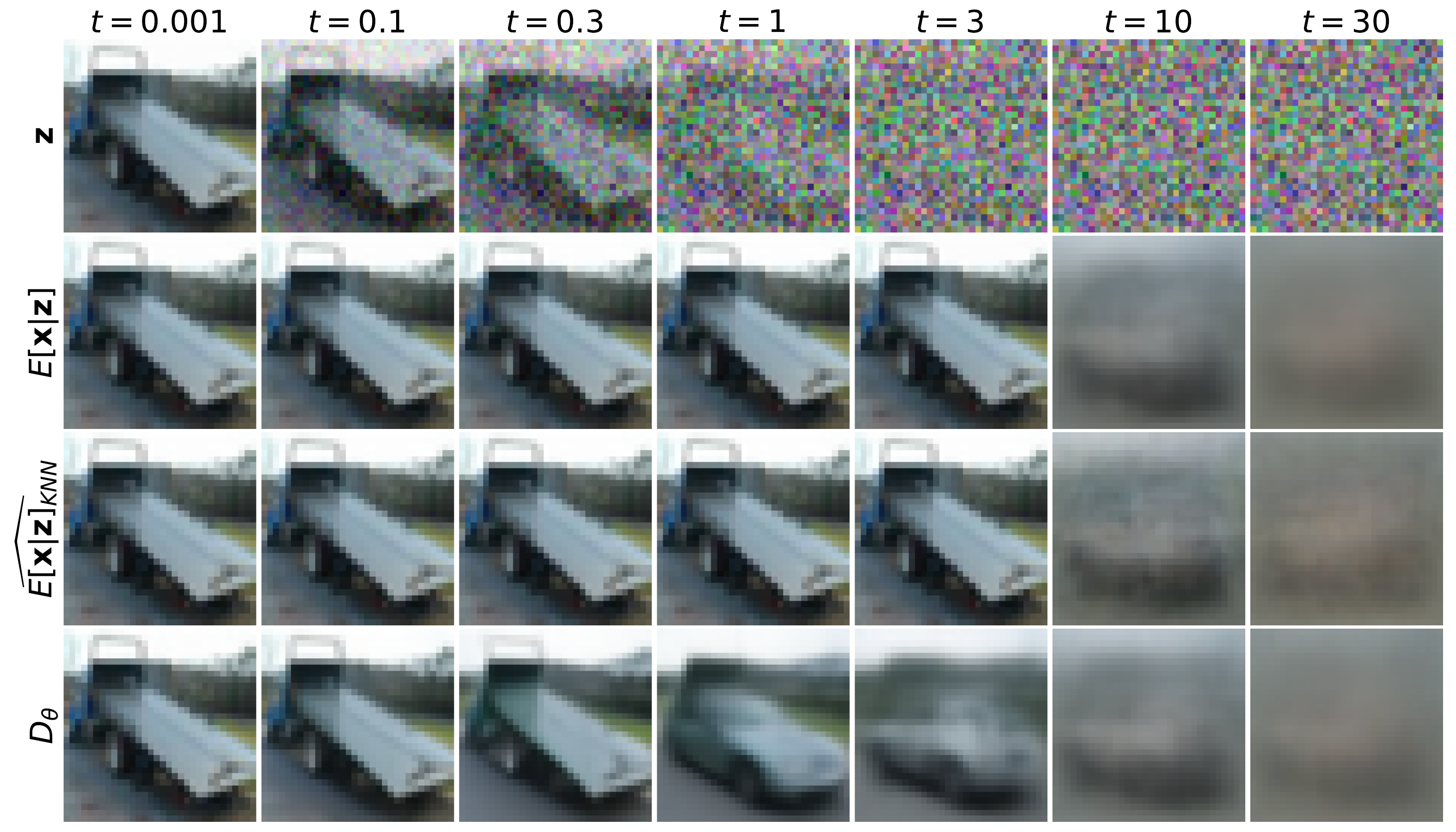}
    \end{minipage}
    \caption{Additional qualitative comparisons of our estimator versus the ground truth posterior mean and a trained EDM denoiser.}
    \label{fig:additional_qualitative}
\end{figure*}

\subsection{CIFAR-10 Consistency Model Samples}

\Cref{fig:cifar_samples} shows examples from our trained consistency models for the models reported in \cref{tab:cifar_fid}. The samples are drawn with consistent seeds for each of the three models.

\begin{figure}[ht!]
    \centering
    \begin{subfigure}
        \centering
        \includegraphics[width=\textwidth]{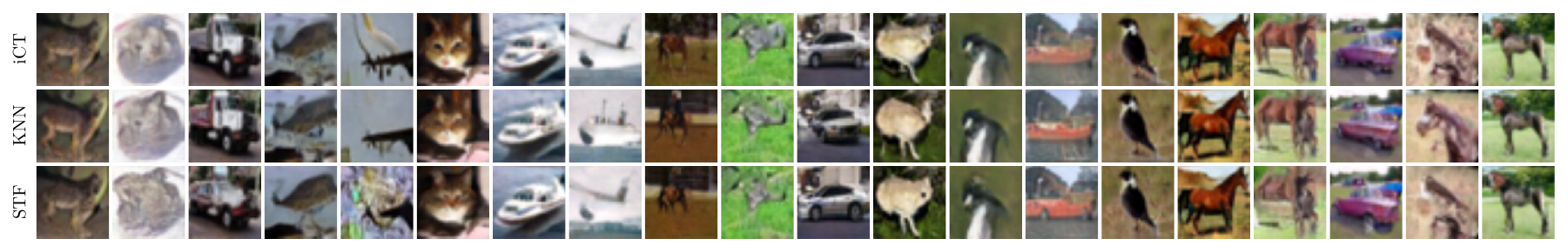} 
    \end{subfigure}
    
    \begin{subfigure}
        \centering
        \includegraphics[width=\textwidth]{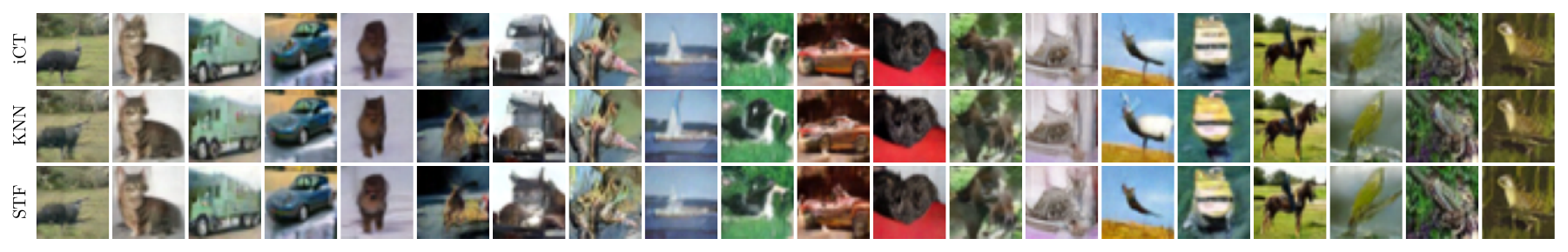} 
    \end{subfigure}

    \begin{subfigure}
        \centering
        \includegraphics[width=\textwidth]{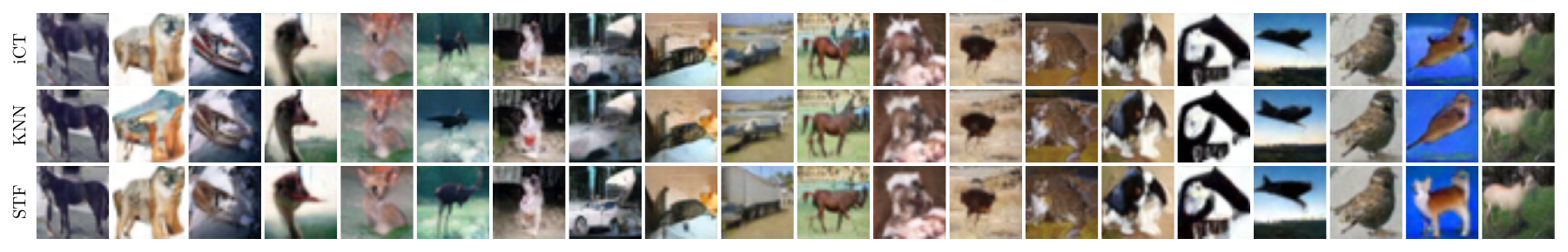} 
    \end{subfigure}

    \begin{subfigure}
        \centering
        \includegraphics[width=\textwidth]{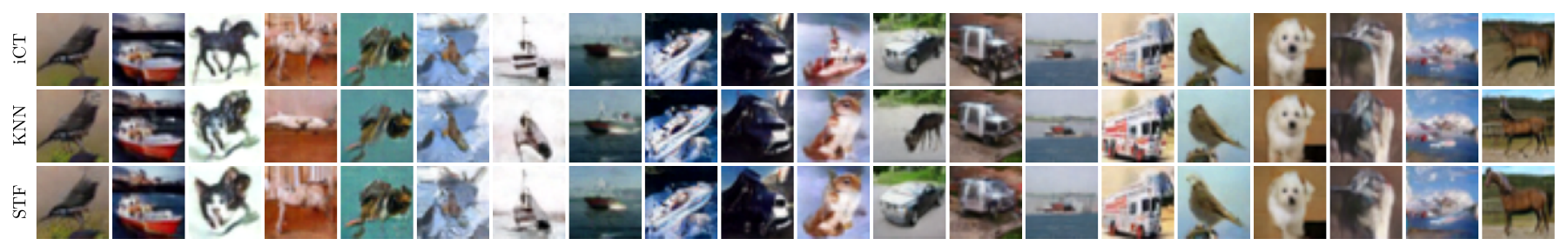} 
    \end{subfigure}

    \begin{subfigure}
        \centering
        \includegraphics[width=\textwidth]{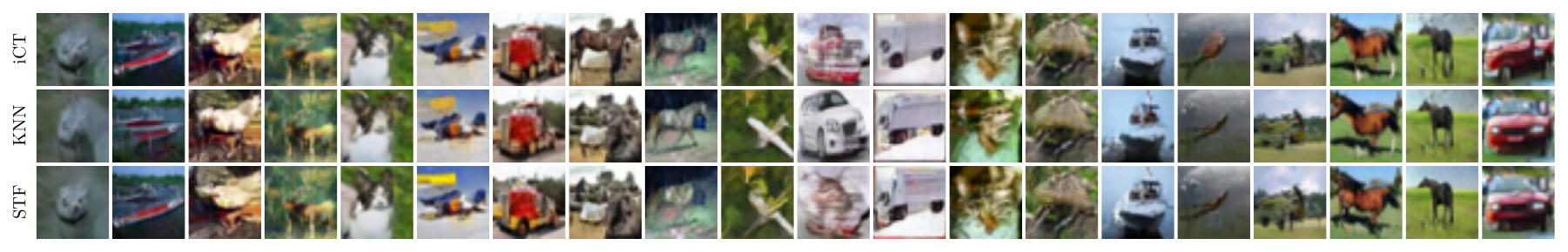} 
    \end{subfigure}

    \begin{subfigure}
        \centering
        \includegraphics[width=\textwidth]{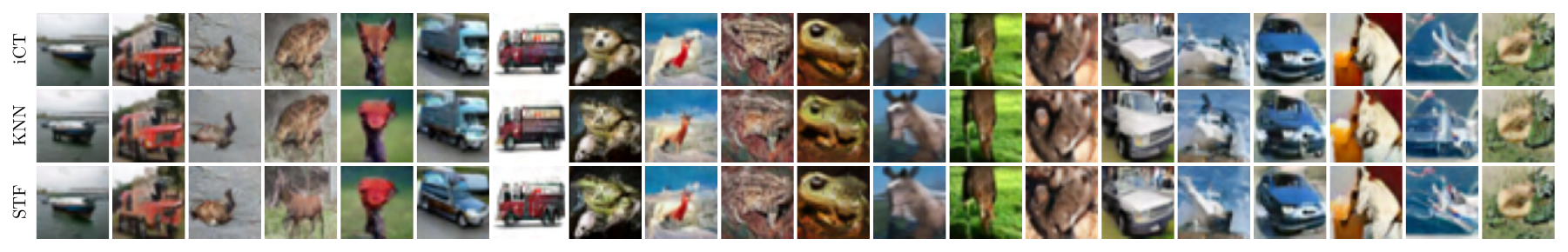} 
    \end{subfigure}

    \caption{CIFAR-10 samples from trained consistency models with varying score estimators.}
    \label{fig:cifar_samples}
\end{figure}

\subsection{Diffusion Models}

Although we demonstrate our method's effectiveness for consistency model training in \cref{sec:consistency}, our method is not specifically tailored to consistency models, and may be used during model training whenever a empirical score estimator is required. In particular, our estimator can be directly applied to train diffusion models.

To evaluate the effect of our estimator on diffusion model training, we train and evaluate unconditional EDM \cite{karras2022elucidating} models on CIFAR-10 \cite{krizhevsky2009learning} using varying score estimators. Specifically, we train models using the standard single-sample Monte Carlo estimator, the STF estimator \cite{xu2023stable}, and our nearest neighbour estimator. We retrain EDM and STF from their official repositories, utilizing the DDPM++ architecture for all models. We report the best FID and IS on 50,000 samples across all model checkpoints in \cref{tab:diffusion_results}

\begin{table}[h!]
    \centering
    \caption{Unconditional EDM CIFAR-10 sample quality with varying score estimators.}
    \begin{tabular}{l r  r }
        \hline
         \textbf{Method}&  \textbf{FID}$\downarrow$&  \textbf{IS}$\uparrow$ \\
         \hline
         EDM \cite{karras2022elucidating} & 2.01 & 9.76  \\
         STF \cite{xu2023stable}& \textbf{1.94} & 9.81 \\
         EDM + KNN (ours) & 1.96 & \textbf{9.87}\\
         \hline
    \end{tabular}
    \label{tab:diffusion_results}
\end{table}

From \cref{tab:diffusion_results}, we see that both our method and STF improve the quality of generated samples over EDM. Compared to STF, our method has similar FID and better Inception Score.

\subsection{Score Estimation on Additional Datasets}

We apply our approach to two additional datasets to evaluate the efficacy of our method on larger and higher dimensionality data. \Cref{fig:celeba} shows the performance of our score estimator on CelebA 64x64 \cite{liu2015faceattributes}, while \cref{fig:imagenet} demonstrates the score estimation error of out method on Imagenet 64x64 \cite{deng2009imagenet}.

\begin{figure}[ht!]
    \centering
    \includegraphics[width=0.8 \textwidth]{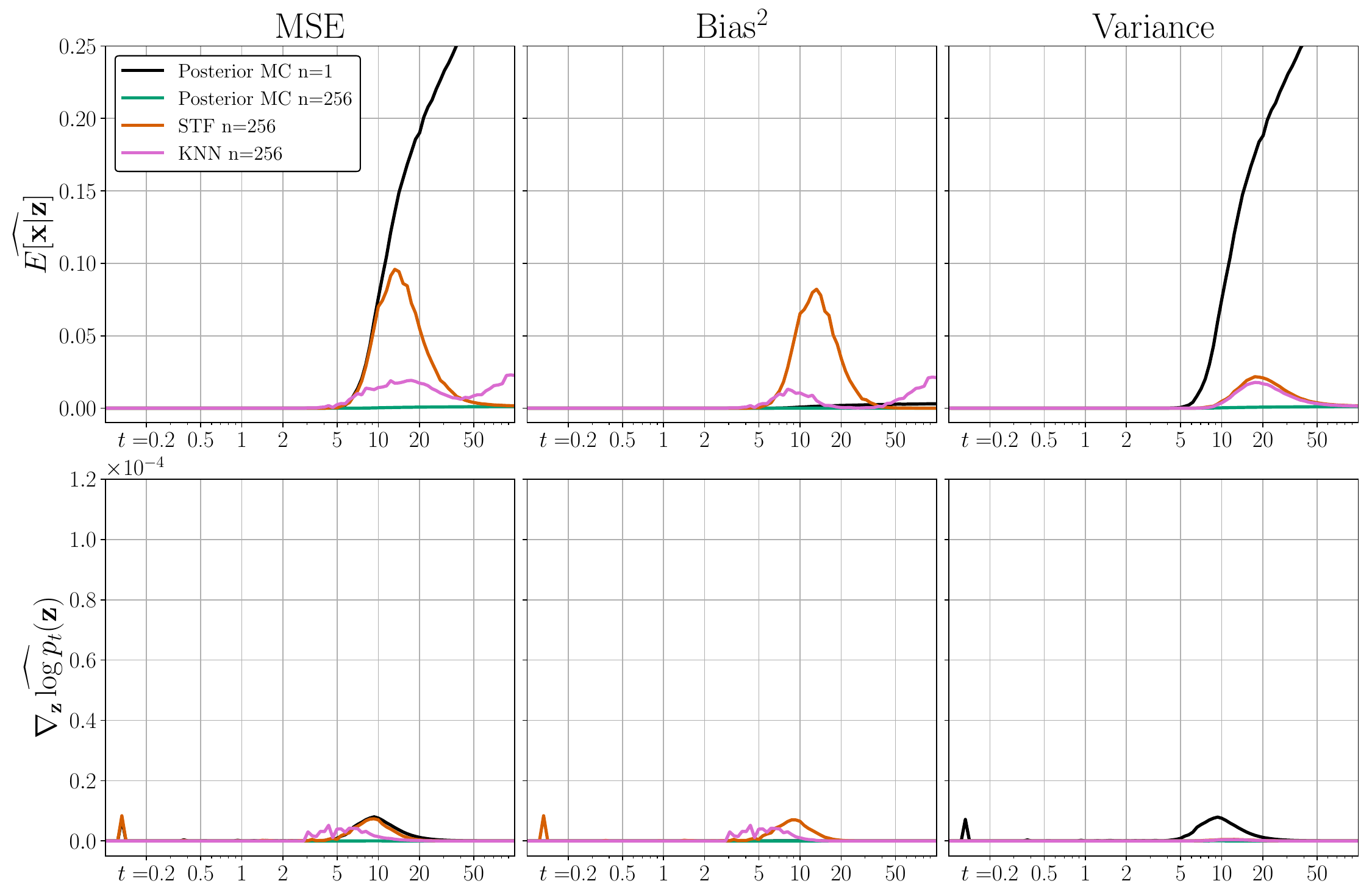}
    \caption{Comparison of various score estimators on unconditional CelebA 64x64}
    \label{fig:celeba}
\end{figure}

\begin{figure} [ht!]
    \centering
    \includegraphics[width= 0.8 \textwidth]{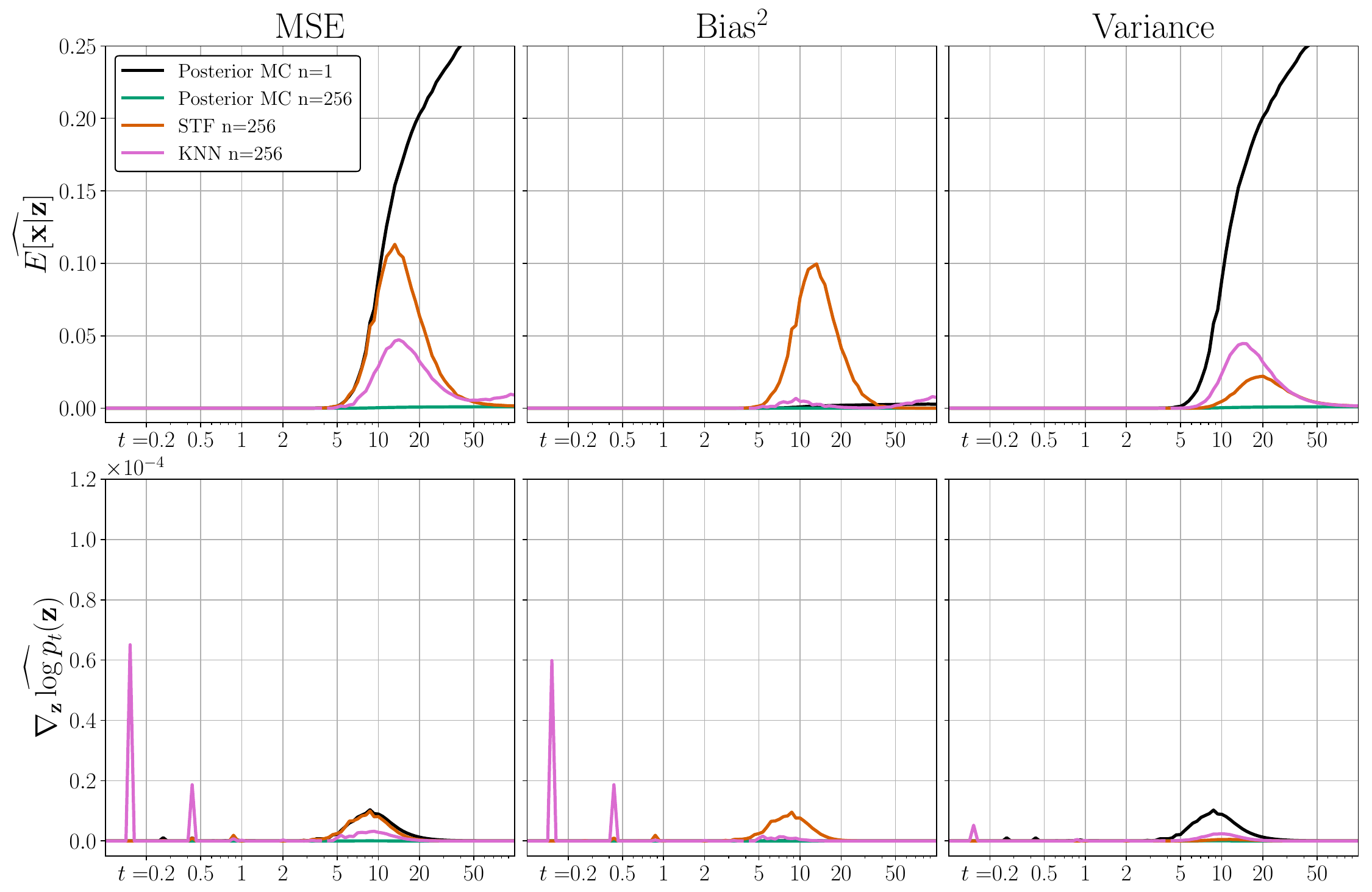}
    \caption{Comparison of various score estimators on unconditional Imagenet 64x64}
    \label{fig:imagenet}
\end{figure}

Applying our method to higher dimensional and larger datasets presents two challenges. Firstly, to perform fast nearest neighbours search, we generate an index of the entire dataset which is stored in memory. Larger images and larger datasets therefore present memory concerns. Secondly, larger datasets increase the runtime of nearest neighbour search, which especially apparent on large datasets such as Imagenet. 

To address these challenges, we apply two strategies. To reduce the memory overhead of our index, we quantize the data to 8 bit precision, reducing memory overhead by a factor of four. This has a minimal affect on our estimator, as pixel intensities are generally represented as 8 bit values. To improve the runtime of nearest neighbour search, we utilize an inverted file index (IVF) which partitions the data into $c$ clusters and performs nearest neighbour search over only the $p$ closest partitions. Both $c$ and $p$ can be tuned to trade off runtime versus accuracy. For CelebA 64x64 we utilized $c=1788$ and $p=25$ while for Imagenet, we used $c=4527$ and $p=100$. 

In our work, we assume that the nearest neighbour search returns the $k$ most probable elements of the posterior distribution. However, by switching to an approximate nearest neighbour search, this may no longer be true. We evaluate the impact of using approximate nearest neighbour search on estimator performance in \cref{fig:approximate_compare}.

\begin{figure} [h!]
    \centering
    \includegraphics[width=0.8\textwidth]{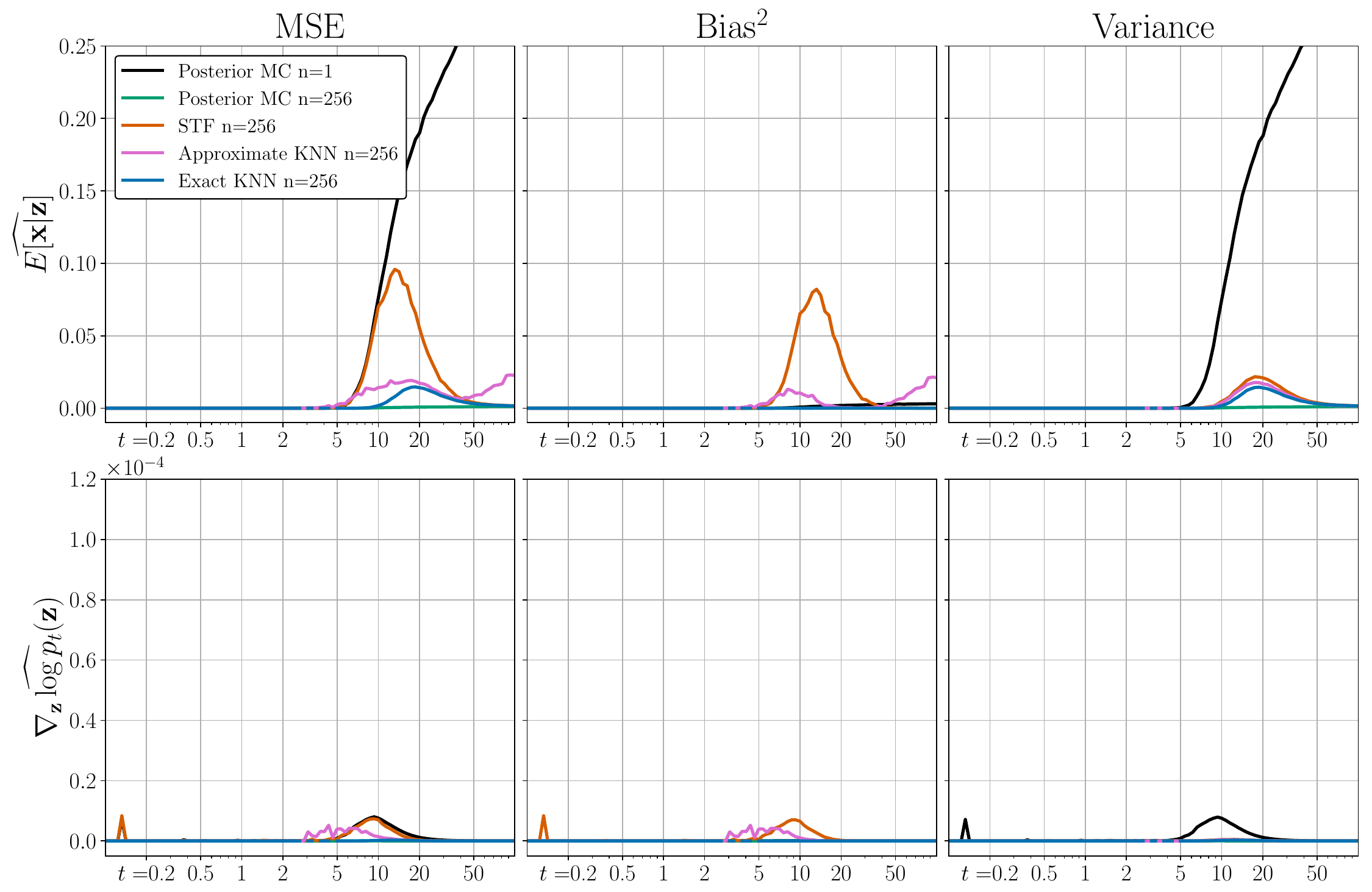}
    \caption{Comparison of approximate and exact nearest neighbour score estimators on CelebA 64x64}
    \label{fig:approximate_compare}
\end{figure}

From \cref{fig:approximate_compare}, we can see that approximate nearest neighbour introduces some additional bias to the posterior mean estimation, however the bias is still less that STF for the majority of noise levels. We expect bias could be reduced by increasing the number of samples $n$ or the number of search partitions $p$.
\section{Experimental Details}
\subsection{Consistency Model Training Configurations}

\begin{table}[ht!]
    \centering
    \caption{Hyperparameter choices for CIFAR-10 consistency model training}
    \begin{tabular}{l| c c c}
    \hline
         Hyperparameter & iCT & iCT + KNN & iCT + STF \\
    \hline
        Training iterations & 400,000 & 400,000 & 400,000 \\
        Batch Size & 1024 & 1024 & 1024 \\
        Learning Rate & 0.0001 & 0.0001 & 0.0001 \\
        EMA Decay Rate & 0.99993 & 0.99993 & 0.99993 \\
        Dropout & 0.3 & 0.3 & 0.3 \\
        ODE Solver & Euler & Huen & Huen \\
        Score Estimator & Posterior MC & KNN & STF \\
        Estimator batch size & 1 & 256 & 256 \\
        KNN search size & N/A & 2048 & N/A\\
        \hline
    \end{tabular}
    \label{tab:hyperparams}
\end{table}

To train the consistency models reported in \cref{tab:cifar_fid}, we follow the configuration outlined by \citet{song2023improved}. For the reader's convenience we reiterate that configuration here. We use the NCSN++ architecture \cite{song2020score} for all models.  Each model was trained using the improved consistency training algorithm, with the Pseudo-Huber loss function \cite{charbonnier1997deterministic} with $c=0.03$. We weight the consistency matching loss with the weighting function $\lambda(t) = \frac{1}{\sigma_{t} - \sigma_{t-1}}$. Noise levels are sampled from a log-normal distribution with $P_{mean}=-1.1$ and $P_{std}=2.0$. We follow an exponential discretzation schedule, starting with $s_0=10$ discretization steps, doubling every 50,000 training iterations to a final value of $s_1=1280$. We do not use exponential averaging on the teacher network, instead just applying a stopgrad to the student network weights. Hyperparameters for all runs are reported in \cref{tab:hyperparams}

We checkpoint each model every 2.5 million training images. We evaluate each checkpoint by generated 50,000 images and comparing against the training dataset using the torch-fidelity package \citep{obukhov2020torchfidelity}. We share random seeds across all evaluations. For each method, we select the checkpoint with the minimum FID to report in  \cref{tab:cifar_fid}.

\begin{table}[ht!]
    \centering
    \caption{Total training times for consistency model training}
    \begin{tabular}{l r r}
    \hline
         \textbf{Model} & \textbf{Training time} & \textbf{Training Time vs iCT}\\
    \hline
    iCT &   188.9 hrs  & 100\% \\
    iCT + KNN & 220.5 hrs & 117\% \\
    iCT + STF & 185.8 hrs & 98.3\% \\
    \hline
    \end{tabular}
    \label{tab:consistency_runtime}
\end{table}

The total training time for each of the above configurations is reported in \cref{tab:consistency_runtime}. We note that each model was trained on a different node of A100 GPUs which may explain how the iCT+STF model was able to train faster than the iCT model despite extra score estimation overhead.


\end{document}